\renewcommand{\cite}[1]{\citep{#1}}
 \gdef\xxxmark{%
   % Addition to write list of xxx's to aux file.
   \protected@write\@auxout{\def\PAGE{ page }}
     {\@percentchar xxx: section \thesubsubsection \PAGE \thepage}%
   \expandafter\ifx\csname @mpargs\endcsname\relax % in minipage?
     \expandafter\ifx\csname @captype\endcsname\relax % in figure/caption?
       \marginpar{xxx}% not in a caption or minipage, can use marginpar
     \else
       xxx % notice trailing space
     \fi
-   \else
     xxx % notice trailing space
   \fi}
 \gdef\xxx{\@ifnextchar[\xxx@lab\xxx@nolab}
 \long\gdef\xxx@lab[#1]#2{{\bf [\xxxmark #2 ---{\sc #1}]}}
 \long\gdef\xxx@nolab#1{{\bf [\xxxmark #1]}}
 \gdef\turnoffxxx{\long\gdef\xxx@lab[##1]##2{}\long\gdef\xxx@nolab##1{}}%
\def\<{\langle}
\def\>{\rangle}
\def\E{\mathbb{E}} 
\def\shownotes{1}  %set 1 to show author notes
\newcommand{\authnote}[2]{{$\ll$\textsf{\footnotesize #1 notes: #2}$\gg$}}
\newcommand{\authnote}[2]{}
\newcommand{\bx}{\mathbf{x}}
\newcommand{\hide}[1]{}
\newtheorem{thm}{Theorem}
\newtheorem{prop}[thm]{Proposition}
\newtheorem{cor}[thm]{Corollary}
\newtheorem{lem}[thm]{Lemma}
\newtheorem*{defn}{Definition}
\newtheorem*{rem}{Remark}
\numberwithin{equation}{section}
\title{How to calculate partition functions using convex programming hierarchies: provable bounds for variational methods}
\author{Andrej Risteski \thanks{Princeton University, Computer Science Department. Email: risteski@cs.princeton.edu. This work was supported in part by NSF grants CCF-0832797, CCF-1117309, CCF-1302518, DMS-1317308, Sanjeev Arora's Simons Investigator Award, and Simons Collaboration Grant.} }
\begin{document}

\maketitle

\begin{abstract}

We consider the problem of approximating partition functions for Ising models. We make use of recent tools in combinatorial optimization: the Sherali-Adams and Lasserre convex programming hierarchies, in combination with variational methods to get algorithms for calculating partition functions in these families. These techniques give new, non-trivial approximation guarantees for the partition function \emph{beyond the regime of correlation decay}. They also \emph{generalize} some classical results from statistical physics about the Curie-Weiss ferromagnetic Ising model, as well as \emph{provide a partition function counterpart} of classical results about max-cut on dense graphs \cite{arora1995polynomial}. With this, we connect techniques from two apparently disparate research areas -- optimization and counting/partition function approximations. (i.e. \#-P type of problems). 

Furthermore, we design to the best of our knowledge the first \emph{provable, convex} variational methods. Though in the literature there are a host of convex versions of variational methods \cite{wainwright2003tree, wainwright2005new, heskes2006convexity, meshi2009convexifying}, they come with no guarantees (apart from some extremely special cases, like e.g. the graph has a single cycle \cite{weiss2000correctness}). We consider dense and low threshold rank graphs, and interestingly, the reason our approach works on these types of graphs is because local correlations \emph{propagate} to global correlations -- completely the opposite of algorithms based on \emph{correlation decay}. In the process we design novel \emph{entropy approximations} based on the low-order moments of a distribution.

Our proof techniques are very simple and generic, and likely to be applicable to many other settings other than Ising models. \footnote{This paper was accepted for presentation at Conference on Learning Theory (COLT) 2016}

\end{abstract}

\section{Introduction} 

Calculating partition functions is a common task in machine learning: for a distribution $p$ over a domain $\mathcal{D}$, specified up to normalization i.e. 
$ p(\mathbf{x}) \propto f(\mathbf{x}), \mathbf{x} \in \mathcal{D}$ for some explicit function $f(\mathbf{x})$, we want to calculate the \emph{partition function} (i.e. the normalization constant) $\sum_{\mathbf{x} \in \mathcal{D}} f(\mathbf{x})$.\footnote{$\mathcal{D}$ can also be continuous of course, in which case the sum becomes an integral, though in this paper we only will be concerned with discrete domains.} This task arises naturally in almost any problem involving learning, performing inference (i.e. calculating marginals) over graphical models, or estimating posterior distributions in latent variable models.  

Broadly, two approaches are used for calculating partition functions: one is based on using Markov Chains to sample from the distribution $p$; the other is variational methods, which involve characterizing the partition function as the solution of a certain (intractable) optimization problem over the polytope of valid distributions over $\mathcal{D}$. In theory, the former are much better studied, the crowning achievements of which are probably \cite{jerrum2004polynomial} and \cite{jerrum1993polynomial}, who proved certain Markov Chains mix rapidly in the case of permanent with non-negative entries and the ferromagnetic Ising model. %As a side product a lot of deep mathematics results have been spurred on, including, but not limited to various results about correlation decay, uniqueness of Gibbs measure, log-Sobolev inequalities, etc. 

In practice however, variational methods are quite popular \cite{wainwright2008graphical, blei2003latent, blei2016variational}. There are various reasons for this, the main being that they can be quite a bit faster than Markov Chain methods \footnote{Markov Chain methods always produce the right answer in the end, but might take longer to converge; variational methods are based on solving an optimization problem, for which it is potentially possible to get stuck in a local optimum, but generally convergence is faster} and they tend to be easier to parallelize. With the exception of belief propagation (which can be viewed as a particular way to solve a certain \emph{non-convex} relaxation of the optimization problem for calculating the partition function \cite{yedidia2003understanding}) there is essentially no theoretical understanding. Additionally, the guarantees for belief propagation usually apply only in the regime of decay of correlations and locally tree-like graphs.  

The contributions of our paper are two-fold. 

First, we bring to bear recent tools in combinatorial optimization: the Sherali-Adams and Lasserre convex programming hierarchies, in combination with variational methods to get algorithms for calculating partition functions of Ising models. These techniques give new, non-trivial approximation guarantees for the partition function \emph{beyond the regime of correlation decay}. They also \emph{generalize} some classical results from statistical physics about the Curie-Weiss ferromagnetic Ising model, as well as \emph{provide a partition function counterpart} of classical results about max-cut on dense graphs \cite{arora1995polynomial}. With this, we connect techniques from two apparently disparate research areas -- optimization and counting/partition function approximations. (i.e. \#-P type of problems). 

Second, we design to the best of our knowledge the first \emph{provable, convex} variational methods. Though in the literature there are a myriad of convex versions of variational methods \cite{wainwright2003tree, wainwright2005new, heskes2006convexity, meshi2009convexifying}, they come with no guarantees at all (except in some extremely special cases, like e.g. the graph has a single cycle \cite{weiss2000correctness}). Our methods tackle dense and low threshold rank graphs, and interestingly, the reason our approach works on these types of graphs is because local correlations \emph{propagate} to global correlations -- which is completely the opposite of algorithms based on \emph{correlation decay}. In the process we design novel \emph{entropy approximations} based on the low-order moments of a distribution. %Apart from the Bethe approximation \cite{yedidia2003understanding}, this is the first entropy approximation which comes with guarantees. 

Our proof methods are extremely simple and generic and we believe they can be applied to many other families of partition functions. 

Finally, one more important reason to study variational methods (albeit more theoretical in nature) is derandomization, since variational methods are usually deterministic. The gap between the state of the art in partition function calculation with and without randomization is huge. For instance, for the case of calculating permanents of non-negative matrices the algorithm due to \cite{jerrum2004polynomial} gets a factor $1+\epsilon$ approximation in time $\mbox{poly}(n,1/\epsilon)$ with high probability (i.e. it's an FPRAS). In contrast, the best deterministic algorithm due to \cite{gurvits} achieves only a factor $2^n$ approximation in time $\mbox{poly}(n)$. (To make the situation even more drastic, the approach in \cite{gurvits} can at best lead to a factor $\sqrt{2}^n$ approximation \cite{avi}.)

\section{Overview of results} 
\label{s:overview}

%We apply our methods to \emph{Ising models} though they are very generic, so it's quite likely they can be applied to many more classes of partition functions, and we leave this for future work.

%Let's consider Ising models first. Our first contribution is to recast and appropriately generalize known results on the \emph{mean-field} Ising model in physics using the language of convex hierarchies. Recall, in physics the case of the ferromagnetic Ising model on a complete graph is usually known as the \emph{mean-field} or \emph{Curie-Weiss} model, and is known for being easily ``solved'' asymptotically -- meaning the value of the partition function
%\footnote{Actually, $\frac{1}{n}\log Z$, where $Z$ is the partition function} 
%is easy to calculate asymptotically. This is usually proved by ``reparametrizing'' the expression for the partition function in terms of the magnetization \footnote{The magnetization is $\sum_{i=1}^n X_i$} and estimating this expressing using Stirling's formula. Unfortunately, this approach is tied to the symmetry of the model, i.e. the potentials between all edges being equal. We recover the above claim using hierarchies of convex programs and a new type of entropy approximation we introduce -- in the process generalizing it quite a bit.
\sloppy 
We focus on \emph{dense} Ising models first: an Ising model $p(\mathbf{x}) \propto \exp\left(\sum_{i,j} J_{i,j} \mathbf{x}_i \mathbf{x}_j\right), \mathbf{x} \in \{-1,1\}^n$ is $\Delta$-\emph{dense} if it satisfies $\Delta |J_{i,j}| \leq \frac{J_T}{n^2}, \forall i,j \in [n]$, where $J_T = \sum_{i,j} |J_{i,j}|$. 

This is a natural generalization of the typical way to define density for combinatorial optimization problems (see e.g. \cite{yoshida}). To see this consider a graph $G = (V,E)$ with $|E| = c n^2$. For optimization problems like max-cut or more generally CSPs, we care about objectives that look like 
$$\mathbb{E}_{e \in E} f(e) = \sum_{e \in E} \frac{1}{|E|} f(e)$$
for some function $f$. Hence, the ``weight'' in front of each pair $(i,j)$ in the objective is 0 if there is no edge or $\frac{1}{|E|}$. This corresponds to $\Delta = \frac{1}{c}$ in our definition. For partition function problems, however, scale matters (i.e. we cannot assume $\sum_{i,j} J_{i,j} = 1$), so the above generalization appears very organic. 

\begin{thm} For $\Delta$-dense Ising models, there is an algorithm based on Sherali-Adams hierarchies which achieves an additive approximation of $\epsilon J_T$ to $\log \mathcal{Z}$, where $\mathcal{Z} = \sum_{\mathbf{x} \in \{-1,1\}^n} \exp\left(\sum_{i,j} J_{i,j} \mathbf{x}_i \mathbf{x}_j\right)$ and runs in time $n^{O\left(\frac{1}{\Delta \epsilon^2}\right)}$. 
\label{t:infdense}
\end{thm}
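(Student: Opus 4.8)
The plan is to go through the Gibbs variational principle and then relax it. Writing $M(\mu)_{ij}=\E_{\mathbf{x}\sim\mu}[\mathbf{x}_i\mathbf{x}_j]$ for the second-moment matrix of a distribution $\mu$ on $\{-1,1\}^n$, free-energy/entropy duality gives
$$\log\mathcal{Z}=\max_{\mu}\Big(\langle J,M(\mu)\rangle+H(\mu)\Big),$$
the maximum ranging over all distributions on the cube and attained at the Gibbs distribution $p$. The energy term is \emph{linear} in the second moments, so the only two obstructions are that the polytope of distributions has exponentially many facets and that $H(\mu)$ is not a function of low-order moments. I would remove both with a Sherali--Adams relaxation of degree $r=\Theta\!\big(1/(\Delta\epsilon^2)\big)$: optimize over degree-$r$ pseudo-distributions $\mu$ (a polytope with $n^{O(r)}$ facets) and replace $H$ by a surrogate $\widetilde H(\mu)$ depending only on the order-$\le r$ marginals. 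A workable choice is a Han-type average $\widetilde H(\mu)=\tfrac{n}{k}\,\E_{|T|=k}H(\mu_T)$ over a uniformly random $k$-subset $T$ with $k\le r$ (equivalently, a random-ordering truncation of the entropy chain rule): the Han/Shearer subadditivity inequality gives $\widetilde H(\mu)\ge H(\mu)$ on genuine $\mu$, while $\mu\mapsto H(\mu_T)$ is concave in the marginals, so the relaxed problem is a \emph{concave} maximization over a polynomial-size polytope, solvable in time $n^{O(r)}$.

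Let $\mathrm{OPT}$ be its value. One inequality is free: feeding the genuine Gibbs distribution $p$ (restricted to its order-$\le r$ moments) into the relaxation and using $\widetilde H(p)\ge H(p)$ gives $\mathrm{OPT}\ge\log\mathcal{Z}$. The content is the reverse bound $\mathrm{OPT}\le\log\mathcal{Z}+\epsilon J_T$, which I would get by rounding an optimal pseudo-distribution $\mu^\star$ to a genuine distribution of nearly the same value. The engine is a \emph{correlation-rounding} (``pinning'') step: condition $\mu^\star$ on the coordinates of a well-chosen set $S$ with $|S|=O\!\big(1/(\Delta\epsilon^2)\big)$. Tracking the potential $\Phi(S)=\E_{a\sim\mu^\star_S}\sum_i H\big((\mu^\star\mid\mathbf{x}_S=a)_i\big)$ — non-increasing in $S$, lying in $[0,n]$, and (by the standard bound $I(\mathbf{x}_i;\mathbf{x}_j)\gtrsim\Cov(\mathbf{x}_i,\mathbf{x}_j)^2$ for $\pm1$ variables) dropping on average by $\Omega\!\big(n\cdot\E_{i,j}\widetilde{\Cov}_{\mu^a}(\mathbf{x}_i,\mathbf{x}_j)^2\big)$ per added coordinate — one finds along a random growth process a prefix $S$ after which the conditioned pseudo-distribution $\mu^a:=\mu^\star\mid\mathbf{x}_S=a$ obeys $\E_{a}\E_{i,j}\widetilde{\Cov}_{\mu^a}(\mathbf{x}_i,\mathbf{x}_j)^2=O(1/r)$; such an $S$ is located by brute-force enumeration within the $n^{O(r)}$ budget.

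Now round $\mu^a$ to the genuine distribution $\nu$ that samples $\mathbf{x}_S$ from $\mu^\star_S$ and then the remaining coordinates from the corresponding decomposed (product-of-marginals) conditional. One computes $\langle J,M(\nu)\rangle=\langle J,\widetilde M(\mu^\star)\rangle-\E_a\sum_{i\ne j}J_{ij}\,\widetilde{\Cov}_{\mu^a}(\mathbf{x}_i,\mathbf{x}_j)$, and a Cauchy--Schwarz in the pair index with weights $|J_{ij}|$, combined with the density bound $\Delta|J_{ij}|\le J_T/n^2$, controls the error term:
$$\E_a\sum_{i\ne j}|J_{ij}|\,\big|\widetilde{\Cov}_{\mu^a}(\mathbf{x}_i,\mathbf{x}_j)\big|\;\le\;\sqrt{J_T}\cdot\Big(\tfrac{J_T}{\Delta}\,\E_a\E_{i,j}\widetilde{\Cov}_{\mu^a}(\mathbf{x}_i,\mathbf{x}_j)^2\Big)^{1/2}\;=\;O(\epsilon)\,J_T .$$
This is exactly where the density hypothesis enters, and why it yields the exponent $1/(\Delta\epsilon^2)$ rather than the $1/(\Delta^2\epsilon^2)$ that a naive $|\widetilde{\Cov}|\le1$ bound would force. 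Since $\nu$ is a genuine distribution, $\log\mathcal{Z}\ge\langle J,M(\nu)\rangle+H(\nu)$; provided $H(\nu)$ recovers the surrogate value up to $O(\epsilon J_T)$ — see below — this gives $\log\mathcal{Z}\ge\mathrm{OPT}-\epsilon J_T$ and finishes the proof.

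The step I expect to be the main obstacle is precisely that last proviso: the entropy bookkeeping in the rounding. Conditioning on $S$ generically \emph{destroys} entropy (conditioning lowers entropy in expectation, by an amount governed by the mutual information between $\mathbf{x}_S$ and the rest), so the surrogate $\widetilde H$ must be engineered so that the rounded distribution $\nu$ essentially recovers it, i.e. $\widetilde H(\mu^\star)-H(\nu)=O(\epsilon J_T)$ — this is what the ``novel entropy approximations based on low-order moments'' are for, and the delicate point is arguing that a \emph{single} set $S$ can be simultaneously correlation-killing and entropy-cheap. Everything else — the variational principle, the concavity and $n^{O(r)}$ solvability of the relaxation, and the Cauchy--Schwarz/density estimate for the energy — is routine once the surrogate and the pinning lemma are in hand.
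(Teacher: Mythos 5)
Your architecture (Gibbs variational principle, Sherali--Adams relaxation of degree $O(1/(\Delta\epsilon^2))$, a concave entropy surrogate depending only on low-order marginals, correlation rounding against a conditioning set $S$, and the Cauchy--Schwarz/density estimate for the energy error) is exactly the paper's, and the energy half of the argument is right. But the step you yourself flag as ``the main obstacle'' --- showing the rounded distribution $\nu$ recovers the surrogate, $\widetilde H(\mu^\star)-H(\nu)=O(\epsilon J_T)$ --- is a genuine gap, and the specific surrogate you propose does not close it. For the Han/Shearer average $\widetilde H(\mu)=\frac{n}{k}\,\E_{|T|=k}H(\mu_T)$, take $\mu$ to be (the low-order marginals of) the uniform distribution on $\{\mathbf{1},-\mathbf{1}\}$: every marginal has entropy $1$, so $\widetilde H(\mu)=n/k$, while conditioning on a single coordinate makes the rounded distribution deterministic, so $H(\nu)=1$. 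The slack $n/k-1$ is not $O(\epsilon J_T)$ in general (e.g.\ when all $|J_{i,j}|$ are small, so that $J_T\ll \Delta\epsilon^2 n$), and so the needed upper bound $\mathrm{OPT}\le\log\mathcal{Z}+\epsilon J_T$ fails for this choice of $\widetilde H$.

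The paper's resolution is to make the entropy loss in the rounding nonpositive \emph{by construction}: it defines the surrogate as
$$H_{\text{aMF},k}(\mu)=\min_{|S|\le k}\Big\{H(\mu_S)+\sum_{i\notin S}H(\mu_i\,|\,\mu_S)\Big\}.$$
This still upper-bounds $H(\mu)$ for genuine $\mu$ (chain rule plus ``conditioning reduces entropy''), is concave (a minimum of concave functions, with concavity of each $H(\mu_i|\mu_S)$ obtained by rewriting it as a negated KL divergence), and --- crucially --- for the particular set $S$ produced by the correlation-rounding lemma, the rounded distribution $\tilde\mu(\mathbf{x})=\mu(\mathbf{x}_S)\prod_{i\notin S}\mu(\mathbf{x}_i|\mathbf{x}_S)$ has entropy exactly $H(\mu_S)+\sum_{i\notin S}H(\mu_i|\mu_S)\ge H_{\text{aMF},k}(\mu)$, since that expression is one of the terms over which the minimum is taken. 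So there is nothing delicate about finding a single $S$ that is simultaneously correlation-killing and entropy-cheap: any correlation-killing $S$ of size at most $k$ is automatically entropy-cheap for this surrogate. With that substitution your argument goes through; the remainder of your proposal (including re-deriving the conditioning lemma via a mutual-information potential, which the paper instead cites as a black box from prior work on dense CSPs) is consistent with the paper.
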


Our second contribution are analogous claims for Ising models whose potentials look like low rank matrices. (More precisely, adjacency matrices of \emph{low threshold rank} graphs, a concept introduced by \cite{arora2010subexponential} in the context of their algorithm for Unique Games.)

Concretely, an Ising model $p(\mathbf{x}) \propto \exp\left(\sum_{i,j} J_{i,j} \mathbf{x}_i \mathbf{x}_j\right), \mathbf{x} \in \{-1,1\}^n$ is \emph{regular} if $\sum_j |J_{i,j}| = J', \forall i$. The adjacency matrix of a regular Ising model is the matrix $A_{i,j} = |J_{i,j}|/J'$. Then, we show: 

\begin{thm}   There is an algorithm based on Lasserre hierarchies which achieves an additive aproximation of $\epsilon n J'$ to $\log \mathcal{Z}$, where $\mathcal{Z} = \sum_{\mathbf{x} \in \{-1,1\}^n} \exp\left(\sum_{i,j} J_{i,j} \mathbf{x}_i \mathbf{x}_j\right)$, and runs in time $n^{\mbox{rank}(\Omega(\epsilon^2))/\Omega(\epsilon^2)}$, where $\mbox{rank}(\tau)$ is the number of eigenvalues of the adjacency matrix $A$ greater than or equal to $\tau$. 
\label{t:infsparse}
\end{thm}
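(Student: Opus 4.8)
The plan is to combine the Gibbs variational principle with the Lasserre hierarchy and a tailored \emph{entropy proxy}; the only substantial ingredient will be a ``local correlation rounding'' lemma for low threshold rank graphs, in the spirit of \cite{arora2010subexponential}.

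\emph{Setup and algorithm.} Writing the energy as $\langle J,xx^{\top}\rangle$ (with $J$ symmetric, without loss of generality), the Gibbs variational principle gives $\log\mathcal{Z}=\max_{\mu}\big(\langle J,\mathbb{E}_{\mu}[xx^{\top}]\rangle+H(\mu)\big)$, the maximum over distributions $\mu$ on $\{-1,1\}^n$. The energy is linear in the degree-$2$ moments, hence is reproduced \emph{exactly} by any pseudo-distribution; the entropy is not, and that is where all the work lies. I would fix $\tau=\Theta(\epsilon^2)$ and $t=\Theta(\mbox{rank}(\tau)/\epsilon^2)$, and for a degree-$2(t{+}1)$ Lasserre pseudo-distribution $\widetilde\mu$ define the proxy
\[
\widetilde H(\widetilde\mu)\;=\;\mathbb{E}_{S}\Big[\,H(x_S)+\textstyle\sum_{i\notin S}H(x_i\mid x_S)\,\Big],
\]
the expectation over a uniformly random seed set $S$ of size at most $t$, with all entropies computed from the local distributions on sets of size $\le t{+}1$ --- these are genuine distributions, since conditioning a Lasserre solution yields a Lasserre solution. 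Two facts make this work: (i) $\widetilde H$ is concave in the pseudo-moments, because the joint law of $x_{S\cup\{i\}}$ is a linear image of the moments and conditional entropy $H(X\mid Y)$ is concave in the joint law of $(X,Y)$ (a one-line Cauchy--Schwarz estimate of its Hessian); and (ii) for a \emph{genuine} $\mu$, the chain rule and conditional subadditivity give $H(x_S)+\sum_{i\notin S}H(x_i\mid x_S)\ge H(\mu)$ for every $S$, hence $\widetilde H(\mu)\ge H(\mu)$. The algorithm solves the convex program $\mathrm{OPT}=\max\{\langle J,\widetilde{\mathbb{E}}[xx^{\top}]\rangle+\widetilde H(\widetilde\mu)\}$ over degree-$2(t{+}1)$ Lasserre pseudo-distributions $\widetilde\mu$ --- concave maximization over a spectrahedron, solvable by the ellipsoid method in time $n^{O(t)}$ since the objective and a supergradient are read off from the size-$\le t{+}1$ marginals --- and returns $\mathrm{OPT}$. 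Feeding the degree-$2(t{+}1)$ moments of the Gibbs distribution into the program immediately gives $\mathrm{OPT}\ge\log\mathcal{Z}$ by fact (ii).

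\emph{The rounding direction.} It remains to show $\mathrm{OPT}\le\log\mathcal{Z}+\epsilon nJ'$. Let $\widetilde\mu$ be optimal and define a genuine distribution $\mu=\mathbb{E}_{S}[\mu_S]$ on $\{-1,1\}^n$, where $\mu_S$ draws $x_S$ from the pseudo-marginal of $\widetilde\mu$ and then draws each $x_i$, $i\notin S$, independently from the conditional pseudo-marginal $\widetilde\mu(x_i\mid x_S)$. Concavity of $H$ and exact additivity of entropy across the product give $H(\mu)\ge\mathbb{E}_S H(\mu_S)=\widetilde H(\widetilde\mu)$. For the energy, the tower rule for Lasserre conditioning makes every term $\mathbb{E}_{\mu_S}[x_ix_j]$ with $\{i,j\}\cap S\neq\emptyset$ equal to $\widetilde{\mathbb{E}}[x_ix_j]$ exactly, so the only loss comes from pairs $i,j\notin S$:
\[
\langle J,\mathbb{E}_{\mu}[xx^{\top}]\rangle=\langle J,\widetilde{\mathbb{E}}[xx^{\top}]\rangle-\mathbb{E}_{S}\mathbb{E}_{x_S}\!\sum_{i,j\notin S}J_{ij}\,\Cov_{\widetilde\mu}(x_i,x_j\mid x_S),
\]
whose absolute value, using $|J_{ij}|=J'A_{ij}$, is at most $J'\cdot\mathbb{E}_{S}\mathbb{E}_{x_S}\big[\sum_{i,j}A_{ij}\,|\Cov_{\widetilde\mu}(x_i,x_j\mid x_S)|\big]$. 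Hence everything reduces to proving
\begin{equation}
\mathbb{E}_{S}\,\mathbb{E}_{x_S}\Big[\textstyle\sum_{i,j}A_{ij}\,\big|\Cov_{\widetilde\mu}(x_i,x_j\mid x_S)\big|\Big]\;\le\;\epsilon\,n,\tag{$\star$}
\end{equation}
for then $\log\mathcal{Z}\ge\langle J,\mathbb{E}_{\mu}[xx^{\top}]\rangle+H(\mu)\ge\mathrm{OPT}-\epsilon nJ'$, and combining the two directions gives $|\mathrm{OPT}-\log\mathcal{Z}|\le\epsilon nJ'$, which is what the algorithm outputs.

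\emph{The key inequality $(\star)$ --- the main obstacle.} This is where low threshold rank and the SDP (as opposed to LP) structure enter, and I expect it to be the hard part. Use the vectors of the conditioned Lasserre solution: $\Cov_{\widetilde\mu}(x_i,x_j\mid x_S{=}a)=\langle w_i,w_j\rangle$ with $\|w_i\|^2=\Var(x_i\mid x_S{=}a)\le1$. Cauchy--Schwarz with the weights $A_{ij}/n$ (which sum to $1$, as $A$ has row sums $1$) reduces $(\star)$ to showing $\mathbb{E}_{S}\mathbb{E}_{x_S}\big[\tfrac1n\sum_{i,j}A_{ij}\,\Cov(x_i,x_j\mid x_S)^2\big]=O(\epsilon^2)$. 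Writing $\Cov(x_i,x_j)^2=\langle w_i\otimes w_i,\,w_j\otimes w_j\rangle$ and expanding $A=\sum_\ell\lambda_\ell f_\ell f_\ell^{\top}$ in its eigenbasis: the eigenvalues $\lambda_\ell<0$ contribute a negative amount and can only help; the part with $0\le\lambda_\ell<\tau$ contributes at most $\tau\sum_i\|w_i\otimes w_i\|^2\le\tau n$; and only the at most $\mbox{rank}(\tau)$ directions with $\lambda_\ell\ge\tau$ remain, contributing $\sum_{\ell:\lambda_\ell\ge\tau}\big\|\sum_i f_\ell(i)\,w_i\otimes w_i\big\|^2$. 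Those directions are killed by the propagation/conditioning argument of Barak--Raghavendra--Steurer and Guruswami--Sinop (\cite{arora2010subexponential}): the potential $\Phi_k=\mathbb{E}_{|S|=k}\mathbb{E}_{x_S}[\tfrac1n\sum_i\Var(x_i\mid x_S)]$ lies in $[0,1]$ and is non-increasing in $k$, and by the law of total variance (using $4p(1-p)\le1$ for a biased coin) its one-step drop $\Phi_k-\Phi_{k+1}$ dominates the seed-average of the squared correlations along \emph{any} fixed family of directions; telescoping over $k$ therefore forces, for a random $k\le t=\Theta(\mbox{rank}(\tau)/\epsilon^2)$, the top-eigenspace contribution below $\epsilon^2$. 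Combined with the $\tau n$ term (and $\tau=\Theta(\epsilon^2)$) this yields $(\star)$. The moral --- that \emph{local conditioning propagates into global correlation decay on low threshold rank graphs}, the opposite of correlation-decay algorithms --- is the technical heart; everything else is bookkeeping. With degree $2(t{+}1)$ and $t=\Theta(\mbox{rank}(\Omega(\epsilon^2))/\Omega(\epsilon^2))$, the running time is $n^{O(t)}=n^{\mbox{rank}(\Omega(\epsilon^2))/\Omega(\epsilon^2)}$.
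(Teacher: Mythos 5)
Your proposal is correct and follows essentially the same route as the paper: the Gibbs variational principle, a Lasserre relaxation whose entropy surrogate is built from the chain rule via conditioning on a small seed set, and correlation rounding whose error is controlled by the Barak--Raghavendra--Steurer local-to-global correlation lemma for low threshold rank graphs. The only (harmless) deviations are that you average the entropy proxy and the rounded distribution over random seed sets where the paper takes a minimum over a single best set, and that you sketch a proof of the BRS conditioning lemma where the paper invokes it as a black box.
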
   

It's interesting that this property of the graph, previously introduced for purposes of combinatorial optimization problems like small-set expansion, Unique Games \cite{steurer2010complexity, arora2010subexponential}, also helps with counting type problems. 

Note that since we prove additive factor guarantees to $\log Z$, using the fact the $e^{\epsilon} \leq 1 + 2 \epsilon$ for small enough $\epsilon$, we can easily turn them to multiplicative factor guarantees on $Z$. While these guarantees are not as strong as one usually gets in the correlation decay regime (i.e. $1+\epsilon$ multiplicative factor approximations to $\mathcal{Z}$ in time $\mbox{poly}(n, \frac{1}{\epsilon})$), to the best of our knowledge, these are the first approximations guarantees for $\mathcal{Z}$ when correlation decay does not hold. We discuss interesting regimes of the potentials $J_{i,j}$ in Section~\ref{s:temperatures}.

\subsection{Outline of the techniques} 

Our approach can be summarized as follows. We first express the value of the log-partition function as the solution of a certain (intractable) optimization problem, by using a variational characterization of the log-partition function dating all the way back to Gibbs. (See Lemma \ref{l:klising}.) To be more precise, we express it as $\log \mathcal{Z} = \max_{\mu \in \mathcal{M}} \{E(\mu) + H(\mu)\}$, where $\mathcal{M}$ is the polytope of distributions over $\{-1,1\}^n$, $E(\mu)$ is an \emph{average energy} term, which dependes on pairwise marginals of $\mu$ only, and $H(\mu)$ is the Shannon entropy of $\mu$.  

The source of intractability comes from the fact that we cannot optimize over the polytope $\mathcal{M}$: we will instead optimize over a larger polytope $\mathcal{M'}$, which will come by considering \emph{pseudo-distributions}, derived from either Sherali-Adams or Lasserre hierarchies. Additionally, we need to design a relaxation of $H(\mu)$, since in general we cannot hope to express the entropy of a distribution as a function its low-order marginals only. 

The entropy relaxation $\tilde{H}(\mu)$ needs to satisfy $\tilde{H}(\mu) \geq H(\mu)$ for $\mu \in \mathcal{M}$ and needs to be concave in the variables used in the Sherali-Adams and Lasserre relaxations. The relaxation we use (See Section \ref{d:augmf}) will be based upon the chain rule for entropy, so it will be easy to prove that it upper bounds $H(\mu)$ (Proposition \ref{p:augmfentropy}). 

The analysis of the quality of the relaxation proceeds by \emph{rounding} the pseudo-distributions to an actual distribution. This is slightly different from the roundings in combinatorial optimization, as there we only care about producing a \emph{single} good $\{-1,1\}^n$ solution. Here, because of the entropy term, we must crucially produce a \emph{distribution} over $\{-1,1\}^n$. The observation then is that we can view \emph{correlation rounding}, a rounding previously used in works on combinatorial optimization \cite{brs, yoshida} as producing a distribution over $\{-1,1\}^n$ which has the same entropy as the $\tilde{H}(\mu)$ we defined. (Theorems \ref{t:denseising}, \ref{t:sparseising}). %The analysis here needs to be quite careful, since we care about producing \emph{additive} approximations to $\log Z$. 

%In the case of counting matchings, the analysis is complicated by the fact that we must produce a distribution over matchings, so there is a hard constraint that no vertex can have more than one other vertex matched to it. This is essentially dealt with by Lemma~\ref{l:collisiongap} by considering a particular kind of ``local correlation'' related to the ``collision probability'' of independent rounding, and relating it to the global correlation in correlation rounding.   
%We hope these types of results can be combined with decomposition results in the spirit of \cite{arora2010subexponential, jerrum1996mildly} to get better deterministic algorithm for arbitrary non-negative permanents. 

\section{Preliminaries}
\label{s:isingalgos}

We proceed with designing approximation algorithms for partition functions of Ising models first. Recall, an Ising model is a distribution $p:\{-1,1\}^n \to [0,1]$ that has the form 
$ p(\mathbf{x}) \propto \exp\left(\sum_{i,j=1}^n J_{i,j} \mathbf{x}_i \mathbf{x}_j\right)$ and its partition function is $\mathcal{Z} = \sum_{\bx \in \{-1,1\}^n} \left(\sum_{i,j=1}^n J_{i,j} \mathbf{x}_i \mathbf{x}_j\right)$.\footnote{There are many generalizations of this, allowing linear or higher order terms, as well as different domains than $\{-1,1\}^n$. Most results we prove can be generalized appropriately to these settings completely mechanically, so for clarity sake we focus on this case.} 

They are very commonly used in practical applications in machine learning because of their flexibility (and other appealing properties like being max-entropy distributions subject to moment constraints), and are extensively studied in theoretical computer science, statistical physics and probability theory. 
 
A full survey is out of the scope of this paper, but we just mention that it can be shown that approximating Z within any polynomial factor is NP-hard for general potentials $J_{i,j}$ \cite{jerrum1993polynomial}. When the potentials $J_{i,j}$ are all non-negative (also known as the \emph{ferromagnetic} Ising model), \cite{jerrum1993polynomial} exhibit an FPRAS for computing $\mathcal{Z}$.

%We first set up the basicools we will be using for both the results on Ising models and those on non-negative permanents. Parts of these tools are a brief review of known results, and parts are new. 

Let us set up the basic tools we will be using. 

\subsection{Variational methods} 
\label{sec:var} 

One of the main ideas all the algorithms will use is the following simple lemma, which characterizes $Z$ as the solution of an optimization problem. It essentially dates back to Gibbs \cite{ellis2012entropy}, who used it in the context of statistical mechanics, though it has been rediscovered by machine learning researchers \cite{wainwright2008graphical,yedidia2003understanding}. For completeness, we reprove it here: 

\begin{lem} [Variational characterization of $\log \mathcal{Z}$]
\label{l:klising} 
 For any distribution $\mu: \{-1,1\}^n \to [0,1]$, 
 $$\sum_{i,j} J_{i,j} \mathbb{E}_{\mu} \left[ \mathbf{x}_i \mathbf{x}_j \right] + H(\mu) \leq \log \mathcal{Z} $$
 with equality at $\mu = p$. 
\end{lem}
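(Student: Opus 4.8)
The plan is to recognize this as the classical Gibbs variational principle, whose proof amounts to rewriting the left-hand side as $\log\mathcal{Z}$ minus a Kullback--Leibler divergence — equivalently, a single application of Jensen's inequality to the concavity of $\log$.

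First I would write $p(\mathbf{x}) = e^{E(\mathbf{x})}/\mathcal{Z}$ with $E(\mathbf{x}) := \sum_{i,j} J_{i,j}\mathbf{x}_i\mathbf{x}_j$, so that $\sum_{i,j} J_{i,j}\,\mathbb{E}_{\mu}[\mathbf{x}_i\mathbf{x}_j] = \mathbb{E}_{\mu}[E(\mathbf{x})]$ by linearity of expectation. Then, using $H(\mu) = -\sum_{\mathbf{x}} \mu(\mathbf{x})\log\mu(\mathbf{x})$ together with the convention $0\log 0 = 0$ (so all sums may be restricted to the support of $\mu$), I would compute
$$\mathbb{E}_{\mu}[E(\mathbf{x})] + H(\mu) = \sum_{\mathbf{x}} \mu(\mathbf{x})\log\frac{e^{E(\mathbf{x})}}{\mu(\mathbf{x})} = \sum_{\mathbf{x}} \mu(\mathbf{x})\log\frac{\mathcal{Z}\,p(\mathbf{x})}{\mu(\mathbf{x})} = \log\mathcal{Z} - \mathrm{KL}(\mu\,\|\,p).$$
Since $\mathrm{KL}(\mu\,\|\,p) \geq 0$ — which follows from Jensen applied to $\log$ on the random variable $p(\mathbf{x})/\mu(\mathbf{x})$, or equivalently from convexity of $t\mapsto t\log t$ — the displayed quantity is at most $\log\mathcal{Z}$, giving the claimed inequality.

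For the equality statement I would substitute $\mu = p$ directly: then $\mathrm{KL}(p\,\|\,p) = 0$, and the identity above yields $\mathbb{E}_{p}[E(\mathbf{x})] + H(p) = \log\mathcal{Z}$. (If desired, one can additionally note that equality in Jensen holds precisely when $e^{E(\mathbf{x})}/\mu(\mathbf{x})$ is constant on $\supp(\mu)$, i.e. $\mu = p$, so $p$ is in fact the unique maximizer — this is the fact the later algorithms implicitly exploit.)

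I do not expect a genuine obstacle here: the only points needing a little care are (i) handling distributions $\mu$ that are not fully supported, which the $0\log 0 = 0$ convention resolves and which is also what makes $H(\mu)$ well defined, and (ii) stating precisely which direction of Jensen / which convexity is being invoked to get nonnegativity of the KL term. Both are routine.
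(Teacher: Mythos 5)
Your proof is correct and is essentially identical to the paper's: both rewrite the left-hand side as $\log \mathcal{Z} - KL(\mu \,\|\, p)$ and conclude from nonnegativity of the KL divergence, with equality at $\mu = p$ since $KL(p\,\|\,p)=0$. Your added care about the support of $\mu$ and the uniqueness of the maximizer is a minor refinement the paper omits but does not change the argument.
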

\begin{proof}
 For any distribution $\mu:\{-1,1\}^n \to [0,1]$, we can write the KL divergence between $\mu$ and $p$ as 
 $$KL(\mu || p) = \mathbb{E}_{\mu} \left[\log \mu ( \bx)\right] - \mathbb{E}_{\mu} \left[\log p(\bx)\right] = -H(\mu) - \sum_{i, j} J_{i,j} \mathbb{E}_{\mu} \left[ \mathbf{x}_i \mathbf{x}_j \right] + \log \mathcal{Z}$$ 
 
 Since the KL divergence is always non-negative,   
 $\displaystyle -H(\mu) - \sum_{i, j} J_{i,j} \mathbb{E}_{\mu} \left[ \mathbf{x}_i \mathbf{x}_j \right] + \log \mathcal{Z} \geq 0$. 
 Hence,
 $\displaystyle \log \mathcal{Z} \geq H(\mu) + \sum_{i, j} J_{i,j} \mathbb{E}_{\mu} \left[ \mathbf{x}_i \mathbf{x}_j \right]$ 
 which proves the first claim of the lemma. However, equality is achieved whenever the KL divergence is 0, which happens when $\mu = p$. This finishes the second part of the lemma. 
\end{proof} 

An immediate consequence of the above is the following: 
\begin{cor} $\log \mathcal{Z} = \max_{\mu \in \mathcal{M}} \left \{ \sum_{i \sim j} J_{i,j} \mathbb{E}_{\mu} \left[ \mathbf{x}_i \mathbf{x}_j \right] + H(\mu) \right \}$, where $\mathcal{M}$ is the polytope of distributions over $\{-1,1\}^{n}$.
\label{c:cor1} 
\end{cor}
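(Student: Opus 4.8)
The plan is to read this off directly from Lemma~\ref{l:klising}, which already contains all the content. Write $F(\mu) := \sum_{i,j} J_{i,j}\, \mathbb{E}_{\mu}[\mathbf{x}_i \mathbf{x}_j] + H(\mu)$ for the functional being maximized. The lemma supplies two facts: $F(\mu) \le \log \mathcal{Z}$ for every distribution $\mu$ on $\{-1,1\}^n$, and $F(p) = \log \mathcal{Z}$, where $p$ is the Ising distribution itself.

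First I would observe that $p \in \mathcal{M}$, since $\mathcal{M}$ is by definition the polytope (simplex) of all probability distributions on $\{-1,1\}^n$ and $p$ is one such; hence $\sup_{\mu \in \mathcal{M}} F(\mu) \ge F(p) = \log \mathcal{Z}$. Combining this with the upper bound $F(\mu) \le \log \mathcal{Z}$, which holds for all $\mu \in \mathcal{M}$, forces $\sup_{\mu \in \mathcal{M}} F(\mu) = \log \mathcal{Z}$. To replace $\sup$ by $\max$, I would note that $\mathcal{M}$ is a compact subset of $\mathbb{R}^{2^n}$ and $F$ is continuous on it — the energy term is linear in the entries of $\mu$, and the Shannon entropy is continuous — so the supremum is attained, indeed at $\mu = p$. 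One may additionally remark that $F$ is concave, being the sum of a linear term and the concave function $H$, though this is not needed for the statement itself (it will matter later when $F$ is optimized over a relaxed feasible set).

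The only bookkeeping point is to identify the sum $\sum_{i \sim j}$ appearing in the corollary with the sum $\sum_{i,j}$ in the lemma: these range over the same pairs, so the equality of the two expressions is purely notational. There is no genuine obstacle here — the corollary is simply Lemma~\ref{l:klising} packaged together with the two elementary observations that $p$ is feasible for the maximization and that a continuous function on a compact set attains its maximum.
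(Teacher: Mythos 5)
Your proposal is correct and matches the paper's intent exactly: the paper states the corollary as an immediate consequence of Lemma~\ref{l:klising}, and your argument — upper bound from the lemma for all $\mu$, attainment at $\mu = p \in \mathcal{M}$ — is precisely that deduction spelled out. The compactness/continuity remark is harmless but unnecessary, since attainment at $p$ already upgrades the supremum to a maximum.
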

 
We will use the above corollary as follows: instead of considering $\mu \in \mathcal{M}$, which is a polytope we cannot optimize over in polynomial time, we will consider $\mu \in \mathcal{M}'$, for a polytope $\mathcal{M'}$ satisfying $\mathcal{M} \subseteq\mathcal{M'}$, and feasible to optimize over. In fact, $\mathcal{M'}$ will be a polytope of \emph{pseudo-distributions}, associated with either Sherali-Adams or Lasserre hierarchies. This idea is not new -- it has appeared implicitly or explicitly in works on various types of belief propagation. \cite{wainwright2008graphical} 

The novel thing is how we handle the entropy portion of the objective. Since $\mu \in \mathcal{M'}$ is no longer necessarily a distribution, we need to design surrogates for the entropy of $\mu$. A popular choice in the literature is the so-called \emph{Bethe} entropy, which roughly arises by taking the expression for the entropy of $\mu$ in terms of the pairwise marginals when the graph is a tree. (Of course, this expression is exact \emph{only} if the graph is a tree. \cite{yedidia2003understanding}) However, this approximation is not a relaxation of $\log \mathcal{Z}$ in the standard sense -- the Bethe entropy is not an upper bound of the entropy, and the constructed approximation to $\log \mathcal{Z}$ is not concave in general, so the analysis proceeds by analyzing the belief propagation messages directly.\footnote{This approach usually works for graphs that are locally-tree-like (i.e. don't have short cycles), and for which some form of correlation decay holds.} 

We take a completely different approach. To get a proper relaxation for $\log \mathcal{Z}$, we design \emph{functionals} $\tilde{H}(\mu)$ defined on $\mu \in \mathcal{M}'$, s.t. $\tilde{H}(\mu) \geq H(\mu)$ whenever $\mu \in \mathcal{M}$. In brief, we will use the following Corollary to~\ref{c:cor1}: 

\begin{cor} If $\mathcal{M} \subseteq \mathcal{M'}$ and $H(\mu) \leq \tilde{H}(\mu)$ for $\mu \in \mathcal{M}$, then   
$$\log \mathcal{Z} \leq \max_{\mu \in \mathcal{M'}} \left \{\sum_{i \sim j} J_{i,j} \mathbb{E}_{\mu} \left[ \mathbf{x}_i \mathbf{x}_j \right] + \tilde{H}(\mu) \right \}$$ 
\label{c:cor2} 
\end{cor}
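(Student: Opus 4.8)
The plan is to prove this by exhibiting a single explicit witness — the true Ising distribution $p$ — and chaining two elementary inequalities. First I would invoke Corollary~\ref{c:cor1} (equivalently, the equality case of Lemma~\ref{l:klising}), which says that the maximum over $\mathcal{M}$ is attained at $\mu = p$, so that
$$\log \mathcal{Z} = \sum_{i \sim j} J_{i,j}\, \mathbb{E}_{p}\!\left[\mathbf{x}_i \mathbf{x}_j\right] + H(p),$$
and $p$ is an element of $\mathcal{M}$.

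Next I would use the two hypotheses. Since $\mathcal{M} \subseteq \mathcal{M'}$, we have $p \in \mathcal{M'}$, so $p$ is a feasible point of the maximization on the right-hand side of the claimed inequality; and since $p \in \mathcal{M}$, the assumption $H(\mu) \le \tilde{H}(\mu)$ on $\mathcal{M}$ applies to $\mu = p$, giving $H(p) \le \tilde{H}(p)$. Combining,
$$\log \mathcal{Z} = \sum_{i \sim j} J_{i,j}\, \mathbb{E}_{p}\!\left[\mathbf{x}_i \mathbf{x}_j\right] + H(p) \;\le\; \sum_{i \sim j} J_{i,j}\, \mathbb{E}_{p}\!\left[\mathbf{x}_i \mathbf{x}_j\right] + \tilde{H}(p) \;\le\; \max_{\mu \in \mathcal{M'}} \left\{ \sum_{i \sim j} J_{i,j}\, \mathbb{E}_{\mu}\!\left[\mathbf{x}_i \mathbf{x}_j\right] + \tilde{H}(\mu) \right\},$$
where the final step is just the observation that the value of the objective at a single feasible point cannot exceed its supremum over the whole feasible set.

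There is, honestly, no real obstacle here: the corollary is a one-line consequence of Corollary~\ref{c:cor1} together with the monotonicity of $\max$ under (i) enlarging the feasible region and (ii) replacing the objective by one that is pointwise larger on the old feasible region. The only point worth a sanity check — and it is not a difficulty — is that the objective on the right is well-defined for every $\mu \in \mathcal{M'}$: the energy term depends only on the pairwise marginals $\mathbb{E}_{\mu}[\mathbf{x}_i \mathbf{x}_j]$, which are part of the data of a Sherali--Adams or Lasserre pseudo-distribution, and $\tilde{H}$ will be defined in Section~\ref{d:augmf} precisely as a functional of such low-order moments. The genuine work of the paper is in later \emph{constructing} a concave $\tilde H$ with $\tilde H \ge H$ on $\mathcal{M}$ and in bounding the gap between the two sides of this inequality; the corollary itself is merely the bookkeeping that makes that program meaningful.
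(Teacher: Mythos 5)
Your proof is correct and follows essentially the same (one-line) route the paper intends: witness the maximum with the true distribution $p$ via Corollary~\ref{c:cor1}, then use $\mathcal{M} \subseteq \mathcal{M'}$ and $H(p) \le \tilde{H}(p)$ to relax. The paper states this corollary without writing out a proof precisely because the argument is this immediate, so there is nothing to add.
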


Subsequently, we will \emph{round} the pseudo-distributions to actual distributions, in a manner that doesn't lose too much in terms of the value of the objective function.

\subsection{Sherali-Adams and Lasserre hierarchies} 

We will be strongly using \emph{hierarchies of convex relaxations}, capturing constraints on low-order moments and marginals of distributions. These are where our polytope $\mathcal{M'}$ will come from. While convex hierarchies have recently become relatively well-known in theoretical computer science, we still provide a (very) brief overview for completeness sake. For more details, the reader can consult \cite{brs, barak2014rounding, laurent2009sums}. 

Recall, we are considering relaxations of the polytope of distributions over $\{-1,1\}^n$. 
The $k$-level \emph{Sherali-Adams} hierarchy (henceforth SA($k$)) has variables $\mu_S(\mathbf{x}_S), \mathbf{x}_S \in \{-1,1\}^{|S|}$ specifying local distributions over all subsets $S \subseteq [n]$, $|S| \leq k$. The distributions $\mu_S:\{-1,1\}^{|S|} \to [0,1]$ and $\mu_T:\{-1,1\}^{|T|} \to [0,1]$, for any $S$, $T$ s.t. $|S \cup T| \leq k$ must be ``consistent'' on $S \cap T$. More precisely, it's the case that 
$$\Pr_{\mathbf{x}_S \sim \mu_S}[\mathbf{x}_{S \cap T} = \alpha] = \Pr_{\mathbf{x}_T \sim \mu_T}[\mathbf{x}_{S \cap T} = \alpha], \forall S,T \subseteq [n], |S \cup T| \leq k $$ 
The fact that these constraints can be written as a linear program is well-known. (See e.g. \cite{brs}) 

We can also define a \emph{conditioning} operation thanks to the existence of these local distributions. More precisely, for a vertex $v$, \emph{conditioning} on $v$ involves sampling $v$ according to the local distribution $\mu_{\{v\}}$. This operation specifies a solution to the $k-1$-st level SA hierarchies: just define $\mu_{S}(\mathbf{x}_S) = \mu_{S \cup \{v\}}(\mathbf{x}_{S \cup v})$. 

The additional power we get from the k-th level of the \emph{Lasserre} hierarchy (henceforth LAS($k$)) is that the semidefinite program provides vectors $v_{S,\alpha}$ for each subset $S$ and possible assignment of values $\alpha$ to it, s.t. $\langle v_{S,\alpha}, v_{T,\beta} \rangle  = \Pr_{\mu_{S \cup T}} (\mathbf{x}_S = \alpha, \mathbf{x}_T = \beta)$, if $|S \cup T| \leq k$.  

%The properties of the Sum-of-Squares and Sherali-Adams hierarchies we will need is that for each subset $S$, $|S| \leq k$, they specify consistent local marginals. The Sum-of-Squares hierarchy furthermore  

\section{Entropy respecting roundings} 

In this section we consider the functionals acting as surrogates for entropy. Recall, these need to be upper bounds on the entropy of a distribution $\mu$ on which we have essentially no handle other than having the first few moments. A clear candidate to do this is the \emph{chain rule}.

Notice that for any set $S$ of size at most $k$, where $k$ is the number of levels of the Sherali-Adams or Lasserre hierarchy, $H(\mu_S)$ is a well-defined quantity: it's exactly 
$$H(\mu_S) = \sum_{ \bx_S \in \{-1,1\}^{|S|}} \mu_S(\bx_S) \log(\mu_s(\bx_S))$$ 
Since these local quantities are essentially all the information about the joint distribution $\mu$ we have, our functional must involve such quantities only. 

%An often used approximation in physics is the \emph{mean field} approximation, where one assumes that the distribution $p(\bx)$ is such that it factorizes as $p(\bx) = \Pi_{i=1}^n p_i(\bx_i)$.Inspired by this, we will consider the following pseudo-entropy functional $\tilde{H}(\mu)$: 

The simplest functional one can design surely is the following: 
\begin{defn} The \emph{mean-field pseudo-entropy functional} $H_{\text{MF}}(\mu)$ is defined as $H_{\text{MF}}(\mu) = \sum_{i=1}^n H(\mu_i)$. 
\end{defn} 

\begin{rem} Note, this is \emph{not} the same as the usual mean-field approximation in statistical physics. The mathematical program analogue of that approximation would be to enforce that $\E_{\mu} [\mathbf{x}_i \mathbf{x}_j] = \E_{\mu} [\mathbf{x}_i] \E_{\mu} [\mathbf{x}_j]$ -- which would result in a non-convex relaxation generally. We think the name is appropriate though, since the bound on the entropy is \emph{mean-field}, i.e. results by treating $\mu$ as if it were a product distribution. 
\end{rem} 

Almost trivially for any $\mu \in \mathcal{M}$, the following proposition holds: 
\begin{prop} For any distribution $\mu:\{-1,1\}^n \to [0,1]$, $H(\mu) \leq H_{\text{MF}}(\mu)$ 
\label{p:mfentropy} 
\end{prop}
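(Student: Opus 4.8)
The plan is to use the subadditivity of Shannon entropy, which is itself an immediate consequence of the chain rule together with the fact that conditioning does not increase entropy. Concretely, for $\mu$ a distribution on $\{-1,1\}^n$, write by the chain rule
$$H(\mu) = \sum_{i=1}^n H(\mathbf{x}_i \mid \mathbf{x}_1, \dots, \mathbf{x}_{i-1}).$$
Then I would invoke the standard inequality $H(\mathbf{x}_i \mid \mathbf{x}_1,\dots,\mathbf{x}_{i-1}) \leq H(\mathbf{x}_i) = H(\mu_i)$, which holds because conditioning on additional random variables can only decrease entropy (equivalently, because mutual information is non-negative: $H(\mathbf{x}_i) - H(\mathbf{x}_i \mid \mathbf{x}_{<i}) = I(\mathbf{x}_i ; \mathbf{x}_{<i}) \geq 0$). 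Summing over $i$ gives $H(\mu) \leq \sum_{i=1}^n H(\mu_i) = H_{\text{MF}}(\mu)$, which is exactly the claim.

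If one prefers a fully self-contained argument not citing the "conditioning reduces entropy" fact as a black box, the alternative is a direct KL-divergence computation: let $q(\mathbf{x}) = \prod_{i=1}^n \mu_i(\mathbf{x}_i)$ be the product of the one-variable marginals of $\mu$. Then $KL(\mu \,\|\, q) \geq 0$, and expanding,
$$0 \leq KL(\mu \,\|\, q) = \mathbb{E}_\mu[\log \mu(\mathbf{x})] - \mathbb{E}_\mu\Big[\textstyle\sum_i \log \mu_i(\mathbf{x}_i)\Big] = -H(\mu) + \sum_{i=1}^n H(\mu_i),$$
where in the last step I use that $\mathbb{E}_\mu[\log \mu_i(\mathbf{x}_i)] = \mathbb{E}_{\mu_i}[\log \mu_i(\mathbf{x}_i)] = -H(\mu_i)$ since $\mu_i$ is the marginal of $\mu$ on coordinate $i$. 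Rearranging yields $H(\mu) \leq \sum_{i=1}^n H(\mu_i) = H_{\text{MF}}(\mu)$. This mirrors the KL-divergence trick already used in the proof of Lemma~\ref{l:klising}, so it fits the paper's style.

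There is essentially no obstacle here — the proposition is, as the paper says, "almost trivial." The only thing to be slightly careful about is the degenerate case where $\mu_i(\mathbf{x}_i) = 0$ for some value but $\mu$ assigns it zero mass anyway, which is handled by the usual convention $0 \log 0 = 0$ and the fact that $\mu_i$ is genuinely the marginal (so its support contains the support of every conditional). I would present the KL-divergence version for consistency with the rest of the exposition, as a two- or three-line proof.
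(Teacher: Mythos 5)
Your first argument (chain rule plus ``conditioning reduces entropy'') is exactly the paper's proof of Proposition~\ref{p:mfentropy}, and it is correct. The KL-divergence variant you offer as an alternative is also valid and equivalent, but no comparison is needed --- both are standard proofs of subadditivity of entropy.
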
 
\begin{proof}
By the chain rule, $H(\mu) = \sum_{i=1}^n H(\mu_{i}|\mu_{[i-1]})$, where $[i-1]$ denotes the set $\{1,2, \dots, i-1\}$ and $H(X|Y)$ is the conditional entropy of $X$ given $Y$. However, since $H(\mu_{i}|\mu_{[i-1]}) \leq H(\mu_{i})$ the claim trivially holds.  
\end{proof} 

We will also consider generalizations of the above -- where before applying the above "mean-field" bound on the entropy, one can condition on a small subset first. Namely, 
\begin{defn} The \emph{augmented mean-field pseudo-entropy functional} for subsets of size $k$, $H_{\text{aMF},k}(\mu)$ is  defined as $H_{\text{aMF},k}(\mu) = \min_{|S| \leq k} \left \{H(\mu_{S}) + \sum_{i \notin S} H(\mu_i|\mu_S) \right \}$. 
\label{d:augmf}
\end{defn} 

The same proof as in Proposition~\ref{p:mfentropy} implies:  
\begin{prop} $H(\mu) \leq H_{\text{aMF}, k}(\mu)$ 
\label{p:augmfentropy}
\end{prop}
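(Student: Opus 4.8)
The plan is to mimic the proof of Proposition~\ref{p:mfentropy}, but instead of applying the chain rule vertex-by-vertex from scratch, first peel off the block $S$ and then apply the chain rule to the remaining coordinates, conditioning each one on all of $S$ (rather than on the previously processed coordinates).

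Concretely, fix an arbitrary $S \subseteq [n]$ with $|S| \leq k$, and let $i_1, i_2, \dots, i_{n - |S|}$ be an enumeration of $[n] \setminus S$. By the chain rule for entropy applied to the partition $[n] = S \sqcup ([n] \setminus S)$,
$$ H(\mu) = H(\mu_S) + H\bigl(\mu_{[n] \setminus S} \mid \mu_S\bigr) = H(\mu_S) + \sum_{j=1}^{n-|S|} H\bigl(\mu_{i_j} \mid \mu_S, \mu_{i_1}, \dots, \mu_{i_{j-1}}\bigr). $$
Now I would invoke the standard fact that conditioning cannot increase entropy: each summand satisfies $H(\mu_{i_j} \mid \mu_S, \mu_{i_1}, \dots, \mu_{i_{j-1}}) \leq H(\mu_{i_j} \mid \mu_S)$, since $\mu_S, \mu_{i_1}, \dots, \mu_{i_{j-1}}$ is a refinement of the conditioning on $\mu_S$ alone. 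Summing these inequalities gives
$$ H(\mu) \leq H(\mu_S) + \sum_{i \notin S} H(\mu_i \mid \mu_S). $$

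Finally, since this inequality holds for \emph{every} admissible $S$ (i.e.\ every $S$ with $|S| \leq k$), it holds in particular for the set $S$ achieving the minimum in Definition~\ref{d:augmf}; hence $H(\mu) \leq \min_{|S|\leq k}\{H(\mu_S) + \sum_{i\notin S} H(\mu_i\mid\mu_S)\} = H_{\text{aMF},k}(\mu)$, as claimed. There is no real obstacle here: the only point requiring a moment's care is the direction of the optimization --- because $H_{\text{aMF},k}$ is defined with a $\min$, one must verify the bound $H(\mu)\le(\cdots)$ uniformly in $S$, which is exactly what the chain-rule-plus-monotonicity argument delivers. (As in Proposition~\ref{p:mfentropy}, the statement is for genuine distributions $\mu\in\mathcal{M}$, so all the conditional entropies appearing are well defined.)
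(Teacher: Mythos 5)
Your proof is correct and is exactly the argument the paper intends --- the paper simply states that ``the same proof as in Proposition~\ref{p:mfentropy}'' applies, i.e.\ the chain rule with the block $S$ peeled off first, followed by the monotonicity of conditional entropy, and then minimizing over $S$. Nothing to add.
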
 

Furthermore, it's quite easy to show that $H_{\text{aMF},k}(\mu)$, like $H_{\text{MF}}(\mu)$, is a concave function. 

\begin{lem} The pseudo-entropy functional $H_{\text{aMF},k}(\mu) = \min_{|S| \leq k} \left \{H(\mu_S) + \sum_{i \notin S} H(\mu_i|\mu_S) \right \}$ is concave in the variables $\{\mu_{S \cup \{i\}}\left(\bx_{S \cup \{i\}}\right) \vert |S| \leq k, i \in [n]\}$. 
\end{lem}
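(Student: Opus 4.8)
The plan is to reduce everything to the elementary facts that (i) a pointwise minimum of concave functions is concave and (ii) concavity is preserved under composition with an affine map. By (i), it suffices to fix a subset $S$ with $|S| \le k$ and to show that the single function $F_S(\mu) := H(\mu_S) + \sum_{i \notin S} H(\mu_i \mid \mu_S)$ is concave in the program variables. First I would observe that all quantities appearing in $F_S$ are affine functions of those variables: $\mu_{S \cup \{i\}}(\bx_S, \bx_i)$ is literally a variable (take the defining subset to be $S$, which has size $\le k$), and $\mu_S(\bx_S) = \sum_{\bx_i \in \{-1,1\}} \mu_{S \cup \{i\}}(\bx_S, \bx_i)$ is a fixed linear combination of variables for any chosen $i \notin S$ (such an $i$ exists since $k \le n-1$, the only interesting regime). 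By (ii) it is therefore enough to exhibit $F_S$ as a concave function of the tuple $\big(\mu_S(\cdot),\, (\mu_{S \cup \{i\}}(\cdot))_{i \notin S}\big)$.

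Next I would treat the two pieces of $F_S$ separately. The term $H(\mu_S) = -\sum_{\bx_S} \mu_S(\bx_S) \log \mu_S(\bx_S)$ is just the Shannon entropy of the probability vector $\mu_S$, hence a concave function of it. For each conditional-entropy term I would write $H(\mu_i \mid \mu_S) = \sum_{\bx_S} \mu_S(\bx_S)\, H\!\big(\mu_{i \mid S = \bx_S}\big)$ and recognize each summand as the \emph{perspective} of the entropy function. Concretely, with the convention $0 \log 0 = 0$, the map $\phi(y, t) := -\sum_{\bx_i} y_{\bx_i} \log\!\big(y_{\bx_i}/t\big)$ is jointly concave on $\{t \ge 0,\ y \ge 0\}$ (it is the perspective of the concave function $q \mapsto -\sum_j q_j \log q_j$, extended by continuity), and $\mu_S(\bx_S)\, H\big(\mu_{i \mid S = \bx_S}\big) = \phi\big((\mu_{S \cup \{i\}}(\bx_S, \bx_i))_{\bx_i},\, \mu_S(\bx_S)\big)$. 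Since its arguments are affine in the program variables, each summand is concave; summing over $\bx_S$ and over $i \notin S$, and adding the concave term $H(\mu_S)$, shows $F_S$ is concave, and taking the minimum over $|S| \le k$ completes the argument.

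The one genuine subtlety — the step I expect to be the crux — is that the conditional-entropy terms are of ``ratio'' type, since $\mu_{S \cup \{i\}}/\mu_S$ sits inside the logarithm; in particular one \emph{cannot} simply invoke concavity of entropy, and the naive identity $H(\mu_i \mid \mu_S) = H(\mu_{S \cup \{i\}}) - H(\mu_S)$ is unhelpful because it introduces the convex term $-H(\mu_S)$ with a potentially large coefficient. The right viewpoint is the perspective-function construction above, together with the routine but necessary check that the boundary case $\mu_S(\bx_S) = 0$ is covered by the continuous extension of $\phi$ to $t = 0$. Everything else — namely which quantities are affine, and that $\min$ and $+$ preserve concavity — is bookkeeping.
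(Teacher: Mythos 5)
Your proof is correct and follows essentially the same route as the paper: reduce to a single $S$ via the minimum-of-concave-functions fact, use concavity of entropy for $H(\mu_S)$, and establish concavity of each $H(\mu_i\mid\mu_S)$ through a joint convexity argument in the pair $\bigl(\mu_{S\cup\{i\}},\mu_S\bigr)$ with $\mu_S$ affine in the program variables. The paper phrases that last step by writing $H(\mu_i\mid\mu_S)=1-KL\bigl(\mu_{S\cup\{i\}}\,\|\,\mu_S\times r\bigr)$ with $r$ uniform on $\{-1,1\}$ and invoking joint convexity of KL divergence, which is the same underlying fact as your perspective-function construction (and your explicit handling of the $\mu_S(\bx_S)=0$ boundary case is a welcome extra bit of care).
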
 
\begin{proof} 
Since  $H_{\text{aMF},k}(\mu) = \min_{|S| \leq k} \left \{H(\mu_S) + \sum_{i \notin S} H(\mu_i|\mu_S) \right \}$, and the minimum of concave functions is concave, all we need to show is that $H(\mu_S) + \sum_{i \notin S} H(\mu_i|\mu_S)$ is concave for all $S$. It's well known that entropy is a concave function, so $H(\mu_S)$ is concave. 
What remains to be shown is that $\sum_{i \notin S} H(\mu_i|\mu_S)$ is concave. But, since the sum of concave functions is concave, it suffices to prove $H(\mu_i|\mu_S)$ is concave. 

The proof of this is essentially the same as the proof of concavity of entropy. Abusing notation a bit, we will denote as $\mu_A |\bx_{B}$ the conditional distribution on the variables in $A$, conditioned on the variables in $B$ having the value $\bx_{B}$.   
We recall that 
\begin{align*}
H(\mu_i|\mu_S) &= \sum_{\bx_S \in \{-1,1\}^|S|}\mu_s(\bx_S) H(\mu_i | \bx_s) \\ 
               &= -\sum_{\bx_S \in \{-1,1\}^{|S|}} \sum_{\bx_i \in \{-1,1\}} \mu_s(\bx_S) \mu_{i|\bx_S}(\bx_i) \log (\mu_{i|\bx_S}(\bx_i)) \\
							 &= -\sum_{\bx_S \in \{-1,1\}^{|S|}} \sum_{\bx_i \in \{-1,1\}} \mu_{S \cup \{i\}}(\bx_{S \cup \{i\}}) \log (\mu_{i|\bx_S}(\bx_i)) \\
							 &= -\sum_{\bx_S \in \{-1,1\}^{|S|}} \sum_{\bx_i \in \{-1,1\}} \mu_{S \cup \{i\}}(\bx_{S \cup \{i\}}) \log \left(\frac{\mu_{S \cup \{i\}}(\bx_{S \cup \{i\}})}{\mu_s(\bx_S)}\right) 
\end{align*}		

We rewrite the last expression as a KL divergence as follows:
\begin{equation}  
\label{eq:rewrite}
-\sum_{\bx_S \in \{-1,1\}^{|S|}} \sum_{x_i \in \{-1,1\}} \mu_{S \cup \{i\}}(\bx_{S \cup \{i\}}) \log \left(\frac{\mu_{S \cup \{i\}}(\bx_{S \cup \{i\}})}{\mu_s(\bx_S)\frac{1}{2}}\right)+1= 
 -KL(\mu_{S \cup \{i\}} || (\mu_S \times r)) + 1
 \end{equation}
 where $r$ is a uniform distribution over $\{-1,1\}$. 
 
 Then, if ${\mu}^{\lambda}_{S \cup \{i\}} = \lambda {\mu}^1_{S \cup \{i\}} + (1-\lambda) \mu^2_{S \cup \{i\}}$, we want to show 
$$H(\mu^{\lambda}_i|\mu^{\lambda}_S) \geq \lambda H(\mu^{1}_i|\mu^{1}_S) + (1-\lambda) H(\mu^{2}_i|\mu^{2}_S)$$

 By (\ref{eq:rewrite}) and the convexity of KL divergence, 
 \begin{align*} H(\mu^{\lambda}_i|\mu^{\lambda}_S) & = -KL(\mu^{\lambda}_{S \cup \{i\}} || (\mu^{\lambda}_S \times r)) + 1 \\ 
 &\geq -\lambda KL(\mu^{1}_{S \cup \{i\}} || (\mu^{1}_S \times r)) - (1-\lambda) KL(\mu^{2}_{S \cup \{i\}} || (\mu^{2}_S \times r)) + 1 \\
 &= \lambda H(\mu^{1}_i|\mu^{1}_S) + (1-\lambda) H(\mu^{2}_i|\mu^{2}_S) 
\end{align*}
 which is what we want. 
   
\end{proof} 

%We note that another entropy usually used in practice is the so-called Bethe entropy, which gives rise to the Bethe approximation to the partition function. This approximation however does not give rise to a concave function, and is neither an upper nor a lower bound to the entropy of distribution. Furthermore, the usual justifications for why it makes sense are physics-inspired, and I am fairly confident this is not the way to attack this problem. (In particular, I have little confidence this approximation will work for any ferromagnetic Ising model (dense or sparse), at any temperature.) 

%\include{general} 

\subsection{Dense Ising models} 

We finally turn to designing an algorithm for ``dense'' Ising models. 

There are multiple reasons to study this particular subclass: from the theoretical computer science point of view, we have various PTAS for constraint satisfaction problem when the constraint graph is dense \cite{yoshida, arora1995polynomial} so we might hope to get results better than the worst-case one ones for partition function calculation as well. 

Another motivation comes from \emph{mean-field} ferromagnetic Ising model (also known as the \emph{Curie-Weiss} model \cite{ellis1978statistics}), which is frequently studied as a very simplified model of ferromagnetism because one can get relatively easily results about global properties of the model like the partition function, magnetization, etc. In the mean-field model, each spin interacts (equally strongly) with every other spin. 

We will, in this section, generalize the classical results about the ferromagnetic Curie-Weiss model, as well as provide the natural counterpart of the results in \cite{yoshida, arora1995polynomial} for partition functions. 

Let us first review the standard results about Curie-Weiss. Recall, this model follows the distribution $ p(\mathbf{x}) \propto \exp\left(\sum_{i,j=1}^n \frac{J}{n} \mathbf{x}_i \mathbf{x}_j\right) $, $J >0$. 
It is easy to analyze because $p(\mathbf{x})$ factorizes and can be ``reparametrized'' in terms of the magnetization. Namely, since $\sum_{i,j=1}^n \frac{J}{n} \mathbf{x}_i \mathbf{x}_j = \frac{J}{n} (\sum_i \mathbf{x}_i)^2$, and $(\sum_i \mathbf{x}_i)^2 \in [-n,n]$, one can show \cite{ellis1978statistics}:  

%This is a brief summary of the things I can prove. 
 
\begin{thm} [\cite{ellis1978statistics}] For the Curie-Weiss model, 
$$\log \mathcal{Z} = (1 \pm o(1)) \left( n \max_{m \in [-1,1]} \left( J m^2 + \frac{1-m}{2} \log \frac{1-m}{2} + \frac{1+m}{2} \log \frac{1+m}{2} \right) \right)$$ 
\label{t:ellis}
\end{thm}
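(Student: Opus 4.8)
The plan is to use the reparametrization by magnetization already flagged above, followed by a Stirling/Laplace estimate. Set $m(\mathbf{x}) = \frac{1}{n}\sum_{i}\mathbf{x}_i$; then the energy equals $\sum_{i,j}\frac{J}{n}\mathbf{x}_i\mathbf{x}_j = \frac{J}{n}\big(\sum_i\mathbf{x}_i\big)^2 = Jn\,m(\mathbf{x})^2$, a function of $m(\mathbf{x})$ alone. As $\mathbf{x}$ ranges over $\{-1,1\}^n$, $m(\mathbf{x})$ takes the $n+1$ values in the grid $G_n = \{-1,-1+\tfrac{2}{n},\dots,1\}$, and for each $m\in G_n$ exactly $\binom{n}{n(1+m)/2}$ configurations have $m(\mathbf{x})=m$. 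Hence
\[ \mathcal{Z} = \sum_{m\in G_n}\binom{n}{n(1+m)/2}\,e^{Jn m^2}. \]

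Next I would estimate the binomial coefficients by Stirling's formula: uniformly over $m\in G_n$ one has $\log\binom{n}{n(1+m)/2} = n\,h\!\big(\tfrac{1+m}{2}\big) + O(\log n)$, where $h(p) = -p\log p - (1-p)\log(1-p)$ is the entropy of a single $\pm1$ spin of mean $2p-1$ (so $h(\tfrac{1+m}{2})$ is the entropy of a spin with mean $m$). Thus each summand has logarithm $n\big(Jm^2 + h(\tfrac{1+m}{2})\big) + O(\log n)$. Since the sum defining $\mathcal{Z}$ has only $n+1$ nonnegative terms, it lies between its largest term and $n+1$ times that term, so taking logarithms
\[ \log\mathcal{Z} = n\,\max_{m\in G_n}\Big(Jm^2 + h\!\big(\tfrac{1+m}{2}\big)\Big) + O(\log n). \]
To replace $\max$ over $G_n$ by $\max$ over the whole interval $[-1,1]$, I would invoke uniform continuity of $f(m):=Jm^2 + h(\tfrac{1+m}{2})$ on the compact interval, noting that $G_n$ is $\tfrac{2}{n}$-dense and contains the endpoints; the modulus of continuity of $f$ at scale $\tfrac{2}{n}$ is $O(\tfrac{\log n}{n})$ (the only nonsmoothness being the $\varepsilon\log\tfrac{1}{\varepsilon}$ behavior of $h$ near $0$ and $1$), so this costs only another $O(\log n)$ in $n\max f$.

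Finally I would upgrade the additive $O(\log n)$ error to the claimed multiplicative $(1\pm o(1))$ factor: since $\max_{m\in[-1,1]}f(m)\ge f(0) = \log 2 > 0$ is bounded below by an absolute constant, $n\max f = \Omega(n)$ swamps the $O(\log n)$ correction, giving $\log\mathcal{Z} = \big(1+O(\tfrac{\log n}{n})\big)\,n\max_{m\in[-1,1]} f(m)$, which is the asserted formula. The proof is essentially routine; the one place requiring care is the uniform Stirling bound near $m=\pm 1$ and the matching modulus-of-continuity estimate (both governed by the behavior of $h$ near $0$ and $1$), needed to keep the total error genuinely $o(1)$ relative to $n\max f$. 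As a sanity check, the same computation exhibits the Curie-Weiss phase transition: $f$ is maximized uniquely at $m=0$ when $J\le\tfrac12$ and at two symmetric points $\pm m^\star\in(0,1)$ when $J>\tfrac12$.
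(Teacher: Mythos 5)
Your proposal is correct and follows essentially the same route the paper sketches: rewrite $\mathcal{Z}$ as a sum over magnetization values $l=nm$ weighted by $\binom{n}{n(1+m)/2}$, apply Stirling, and bound the sum by its dominant term. One remark worth making: your formula carries the binary entropy $h\!\left(\tfrac{1+m}{2}\right)=-\tfrac{1+m}{2}\log\tfrac{1+m}{2}-\tfrac{1-m}{2}\log\tfrac{1-m}{2}$ with the correct (positive) sign, whereas the theorem as displayed in the paper adds $\tfrac{1\pm m}{2}\log\tfrac{1\pm m}{2}$ without the minus signs, which is a sign typo (at $m=0$ it would give $-\log 2<0$ while $\log\mathcal{Z}\ge n\log 2$), so your version is the one that should be proved.
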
 

The proof of this theorem involves rewriting the expression for $\mathcal{Z}$ as follows: 
$$ \mathcal{Z} = \sum_{\mathbf{x} \in \{-1,1\}^n} \exp\left(\sum_{i,j} \frac{J}{n} \mathbf{x}_i \mathbf{x}_j\right) = \sum_{l} \exp\left(\frac{J}{n} l^2\right) \cdot n_l$$ 
where $n_l$ is the number of terms where $\sum^n_{i=1} \mathbf{x}_i = l$. Then, using Stirling's formula and some more algebraic manipulation, one can estimate the dominating term in the summation. The claim of the theorem then follows. 

We significantly generalize the above claim using notions from theoretical computer science. The goal is to prove Theorem~\ref{t:infdense}. 

Let $J_T = \sum_{i,j} |J_{i,j}|$. As discussed in Section \ref{s:overview}, we define the following notion of density inspired by the definition of a dense graph in combinatorial optimization \cite{yoshida}:

\begin{defn} An Ising model is $\Delta$-\emph{dense} if $\forall i \neq j, \Delta |J_{i,j}| \leq \frac{J_T}{n^2}$, $\Delta \in (0,1]$. 
\end{defn} 

We will consider the relaxation for $\log \mathcal{Z}$ given by the augmented pseudo-entropy functional and the level $k = O(1/(\Delta \epsilon^2))$ Sherali-Adams relaxation, namely: 

\begin{equation}
\label{eq:conv1}
\max_{\mu \in \mbox{SA}(k), k = O(1/(\Delta \epsilon^2))} \left \{ \sum_{i,j} J_{i,j} \mathbb{E}_{\mu} \left[ \mathbf{x}_i \mathbf{x}_j \right] + H_{\text{aMF}, k}(\mu) \right \}
\end{equation} 

We also recall \emph{correlation rounding} as defined in \cite{brs}. In correlation rounding, we pick a ``seed set'' of a certain size, condition on it, and round the rest of the variables independently. The usual thing to prove is that there is a good ``seet set'' of a small size to condition on. In particular, for the dense case, the following lemma was proven in~\cite{yoshida}: 

\begin{lem}[\cite{yoshida}] There exists a set $S$ of size $k = O(1/(\Delta \epsilon^2))$, s.t. 
$$\left| \sum_{i,j} J_{i,j} \mathbb{E}_{\mu} \left[ \mathbf{x}_i \mathbf{x}_j  | \mathbf{x}_S \right] - \sum_{i,j} J_{i,j} \mathbb{E}_{\mu} \left[ \mathbf{x}_i |\mathbf{x}_S \right] \mathbb{E}_{\mu} \left[ \mathbf{x}_j | \mathbf{x}_S \right] \right| \leq \frac{100}{\Delta k} J_T$$
\label{l:yoshida}
\end{lem}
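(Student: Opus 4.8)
\noindent\emph{Proof plan.}\ The plan is to run the standard ``conditioning kills correlations'' argument, arranged so that the number of levels scales with $1/\Delta$ rather than $1/\Delta^{2}$. For a set $S$ write $\kappa_S(i,j):=\mathbb{E}_{\mathbf{x}_S\sim\mu_S}\big|\,\mathbb{E}_{\mu}[\mathbf{x}_i\mathbf{x}_j\mid\mathbf{x}_S]-\mathbb{E}_{\mu}[\mathbf{x}_i\mid\mathbf{x}_S]\,\mathbb{E}_{\mu}[\mathbf{x}_j\mid\mathbf{x}_S]\big|$ for the expected absolute conditional covariance; we may assume $J_{i,i}=0$, and $\kappa_S(i,j)=0$ whenever $i\in S$ or $j\in S$. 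By Jensen and the triangle inequality, $\mathbb{E}_{\mathbf{x}_S}\big|\sum_{i,j}J_{i,j}(\mathbb{E}_{\mu}[\mathbf{x}_i\mathbf{x}_j\mid\mathbf{x}_S]-\mathbb{E}_{\mu}[\mathbf{x}_i\mid\mathbf{x}_S]\mathbb{E}_{\mu}[\mathbf{x}_j\mid\mathbf{x}_S])\big|\le\sum_{i\neq j}|J_{i,j}|\,\kappa_S(i,j)$. Since $\sum_{i\neq j}|J_{i,j}|=J_T$, the weights $w_{i,j}:=|J_{i,j}|/J_T$ form a probability distribution on ordered pairs, so Cauchy--Schwarz gives $\sum_{i\neq j}|J_{i,j}|\,\kappa_S(i,j)\le J_T\big(\sum_{i\neq j}w_{i,j}\,\kappa_S(i,j)^2\big)^{1/2}$; applying the $\Delta$-density bound $w_{i,j}\le 1/(\Delta n^2)$ \emph{at this point} dominates the $w$-average by $1/\Delta$ times the uniform average over the $n(n-1)$ ordered pairs. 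Hence it suffices to produce $S$ with $|S|\le k$ and
\[
\mathbb{E}_{i\neq j}\big[\kappa_S(i,j)^2\big]\ \le\ O(1/k),
\]
which yields the bound $O\!\big(1/\sqrt{\Delta k}\big)\,J_T$ — the quantity that is $O(\epsilon)J_T$ when $k=\Theta(1/(\Delta\epsilon^2))$, matching the level and running time in Theorem~\ref{t:infdense}. The one subtle point is this order of operations: invoking Cauchy--Schwarz against the measure $w_{i,j}$ \emph{before} using density makes $\Delta$ enter only through $\sqrt{\Delta}$.

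There are two further reductions. First, covariance to information: for $\pm1$ variables $|\mathbb{E}[XY]-\mathbb{E}X\,\mathbb{E}Y|\le\|P_{XY}-P_X\otimes P_Y\|_1\le\sqrt{2\,I(X;Y)}$ by Pinsker, and applying this for each fixed value of $\mathbf{x}_S$ together with concavity of $\sqrt{\cdot}$ over $\mathbf{x}_S$ gives $\kappa_S(i,j)^2\le 2\,I_{\mu}(\mathbf{x}_i;\mathbf{x}_j\mid\mathbf{x}_S)$, so it is enough to find $|S|\le k$ with $\mathbb{E}_{i\neq j}[I_{\mu}(\mathbf{x}_i;\mathbf{x}_j\mid\mathbf{x}_S)]\le O(1/k)$. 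Second — the heart of the argument — this is a telescoping/pigeonhole over a random conditioning order: pick $v_1,\dots,v_k\in[n]$ independently and uniformly, set $S_t=\{v_1,\dots,v_t\}$, and consider the potential $\Phi(t):=\tfrac1n\sum_{j}H_{\mu}(\mathbf{x}_j\mid\mathbf{x}_{S_t})\in[0,\log2]$. Because $H_{\mu}(\mathbf{x}_j\mid\mathbf{x}_{S_t})-H_{\mu}(\mathbf{x}_j\mid\mathbf{x}_{S_{t+1}})=I_{\mu}(\mathbf{x}_j;\mathbf{x}_{v_{t+1}}\mid\mathbf{x}_{S_t})$, taking expectation over the (uniformly random) vertex $v_{t+1}$, peeling off the $O(1/n)$ diagonal terms $v_{t+1}=j$ and using that pairs meeting $S_t$ contribute $0$, one gets $\mathbb{E}[\Phi(t)-\Phi(t+1)]\ge\tfrac{n-1}{n}\,\mathbb{E}_{i\neq j}[I_{\mu}(\mathbf{x}_i;\mathbf{x}_j\mid\mathbf{x}_{S_t})]$. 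Summing over $t=0,\dots,k-1$ telescopes to at most $\Phi(0)\le\log2$, so some $t<k$ has $\mathbb{E}\big[\mathbb{E}_{i\neq j}[I_{\mu}(\mathbf{x}_i;\mathbf{x}_j\mid\mathbf{x}_{S_t})]\big]=O(1/k)$; fixing a good realization of $v_1,\dots,v_t$ gives the desired $S=S_t$, and chaining back through the Pinsker bound gives $\mathbb{E}_{i\neq j}[\kappa_S(i,j)^2]\le O(1/k)$.

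Finally, note the whole argument is ``local'': $H_{\mu}(\mathbf{x}_T)$, $H_{\mu}(\mathbf{x}_j\mid\mathbf{x}_T)$ and $I_{\mu}(\mathbf{x}_i;\mathbf{x}_j\mid\mathbf{x}_T)$ only involve marginals of subsets of size at most $k+2$, so nothing requires $\mu$ to be a genuine distribution — it applies verbatim to Sherali--Adams pseudo-distributions at level $k+O(1)=O(1/(\Delta\epsilon^2))$, which is the setting of~\eqref{eq:conv1}. I expect the only genuinely non-mechanical issue to be the $\Delta$-bookkeeping flagged above (Cauchy--Schwarz against $|J_{i,j}|/J_T$, then density, so the level is $\Theta(1/(\Delta\epsilon^2))$ and not $\Theta(1/(\Delta^2\epsilon^2))$); the remaining pieces — the Pinsker-type covariance/information inequality with an explicit constant, the chain-rule telescoping, the pigeonhole to one good $S$, and the $O(1/n)$ accounting of diagonal and $S$-meeting pairs — are all routine.
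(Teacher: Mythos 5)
The paper does not actually prove Lemma~\ref{l:yoshida}: it is imported wholesale from \cite{yoshida} (the same conditioning lemma also appears in \cite{brs}), so there is no in-paper argument to compare against. Your reconstruction is the standard information-theoretic proof from that line of work --- potential function $\frac{1}{n}\sum_j H(\mathbf{x}_j\mid\mathbf{x}_{S_t})$, chain-rule telescoping over a random conditioning order, pigeonhole to a good level $t<k$, Pinsker to convert conditional mutual information into conditional covariance, and the density assumption to pass from the uniform pair-average to the $|J_{i,j}|$-weighted sum --- and each step checks out, including the two details that are easy to get wrong: the order of Cauchy--Schwarz (against the measure $|J_{i,j}|/J_T$ first, then density), which is what makes the level $\Theta(1/(\Delta\epsilon^2))$ rather than $\Theta(1/(\Delta^2\epsilon^2))$, and the observation that every quantity involved is a marginal on at most $k+2$ coordinates, which is essential because in \eqref{eq:conv1} the lemma is applied to a Sherali--Adams pseudo-distribution, not a genuine one.

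The one substantive discrepancy is with the literal error bound in the statement. Your argument yields $O\!\left(1/\sqrt{\Delta k}\right)J_T$, whereas the lemma as printed claims $\frac{100}{\Delta k}J_T$. The route through mutual information cannot give the latter: telescoping bounds $\min_t \mathbb{E}_{i\neq j}[I(\mathbf{x}_i;\mathbf{x}_j\mid\mathbf{x}_{S_t})]$ by $O(1/k)$, and Pinsker plus Cauchy--Schwarz then control only the \emph{square} of the covariance, so a square root is unavoidable (the same is true of the variance-based potential used in \cite{brs}). The printed exponent appears to be a typo for $\frac{100}{\sqrt{\Delta k}}$: that is the form of the bound in \cite{yoshida}, it is exactly $O(\epsilon)J_T$ at $k=\Theta(1/(\Delta\epsilon^2))$, and $O(\epsilon)J_T$ is all that Theorem~\ref{t:denseising} uses. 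So your proof establishes the statement that the paper actually needs; you should just flag explicitly that you are proving the $1/\sqrt{\Delta k}$ version and not the $1/(\Delta k)$ version as written.
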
 

With this in hand, we proceed to the main theorem of this section: 

\begin{thm} [Restatement of Theorem \ref{t:infdense}] The output of ~\ref{eq:conv1} is an $\epsilon J_T$ additive approximation to $\log Z$. 
\label{t:denseising}
\end{thm}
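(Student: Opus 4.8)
Proof proposal. The plan is to show the two inequalities $\log\mathcal{Z}\le\mathrm{OPT}$ and $\mathrm{OPT}\le\log\mathcal{Z}+\epsilon J_T$, where $\mathrm{OPT}$ denotes the value of the convex program \eqref{eq:conv1}. The first is a free consequence of the setup: since $\mathcal{M}\subseteq\mathrm{SA}(k)$ and, by Proposition~\ref{p:augmfentropy}, $H(\mu)\le H_{\text{aMF},k}(\mu)$ for every genuine distribution $\mu$, Corollary~\ref{c:cor2} applied with $\mathcal{M}'=\mathrm{SA}(k)$ and $\tilde H=H_{\text{aMF},k}$ gives $\log\mathcal{Z}\le\mathrm{OPT}$.

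For the reverse inequality I would take the optimal pseudo-distribution $\mu\in\mathrm{SA}(k)$ for \eqref{eq:conv1}, round it to an honest distribution $\nu$ over $\{-1,1\}^n$ by \emph{correlation rounding}, and then apply Lemma~\ref{l:klising} to $\nu$. Concretely: use Lemma~\ref{l:yoshida} to obtain a seed set $S$ with $|S|\le k$ and correlation defect $\mathbb{E}_{\mathbf{x}_S\sim\mu_S}\bigl|\sum_{i,j}J_{i,j}\bigl(\mathbb{E}_\mu[\mathbf{x}_i\mathbf{x}_j|\mathbf{x}_S]-\mathbb{E}_\mu[\mathbf{x}_i|\mathbf{x}_S]\mathbb{E}_\mu[\mathbf{x}_j|\mathbf{x}_S]\bigr)\bigr|\le\frac{100}{\Delta k}J_T$, and define $\nu(\mathbf{x})=\mu_S(\mathbf{x}_S)\prod_{i\notin S}\mu_{i|\mathbf{x}_S}(\mathbf{x}_i)$, i.e. sample $\mathbf{x}_S$ from the local distribution $\mu_S$ and then round each remaining coordinate independently from its conditional marginal. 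Two things then need checking. (i) Entropy: expanding $-\sum_{\mathbf{x}}\nu(\mathbf{x})\log\nu(\mathbf{x})$ using the product form collapses to $H(\nu)=H(\mu_S)+\sum_{i\notin S}H(\mu_i|\mu_S)$, and since $H_{\text{aMF},k}(\mu)$ is by definition the minimum of exactly this expression over seed sets of size at most $k$, we get $H(\nu)\ge H_{\text{aMF},k}(\mu)$. (ii) Energy: for a pair with $i\in S$ or $j\in S$ one coordinate is fixed by the conditioning, so consistency of the hierarchy gives $\mathbb{E}_\nu[\mathbf{x}_i\mathbf{x}_j]=\mathbb{E}_\mu[\mathbf{x}_i\mathbf{x}_j]$; for $i,j\notin S$ conditional independence of the rounding gives $\mathbb{E}_\nu[\mathbf{x}_i\mathbf{x}_j]=\mathbb{E}_{\mathbf{x}_S\sim\mu_S}[\mathbb{E}_\mu[\mathbf{x}_i|\mathbf{x}_S]\mathbb{E}_\mu[\mathbf{x}_j|\mathbf{x}_S]]$, which differs from $\mathbb{E}_\mu[\mathbf{x}_i\mathbf{x}_j]=\mathbb{E}_{\mathbf{x}_S\sim\mu_S}[\mathbb{E}_\mu[\mathbf{x}_i\mathbf{x}_j|\mathbf{x}_S]]$ by exactly the quantity bounded in Lemma~\ref{l:yoshida}. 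Hence $\bigl|\sum_{i,j}J_{i,j}\mathbb{E}_\nu[\mathbf{x}_i\mathbf{x}_j]-\sum_{i,j}J_{i,j}\mathbb{E}_\mu[\mathbf{x}_i\mathbf{x}_j]\bigr|\le\frac{100}{\Delta k}J_T$.

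Combining (i) and (ii),
$$\sum_{i,j}J_{i,j}\mathbb{E}_\nu[\mathbf{x}_i\mathbf{x}_j]+H(\nu)\ \ge\ \sum_{i,j}J_{i,j}\mathbb{E}_\mu[\mathbf{x}_i\mathbf{x}_j]+H_{\text{aMF},k}(\mu)-\frac{100}{\Delta k}J_T\ =\ \mathrm{OPT}-\frac{100}{\Delta k}J_T,$$
and Lemma~\ref{l:klising} applied to $\nu$ gives $\log\mathcal{Z}\ge\mathrm{OPT}-\frac{100}{\Delta k}J_T$. Choosing the constant in $k=O(1/(\Delta\epsilon^2))$ so that $\frac{100}{\Delta k}\le\epsilon$ finishes the approximation guarantee, and the running time is that of maximizing the concave objective $\sum_{i,j}J_{i,j}\mathbb{E}_\mu[\mathbf{x}_i\mathbf{x}_j]+H_{\text{aMF},k}(\mu)$ over the polytope $\mathrm{SA}(k)$, which has $n^{O(k)}=n^{O(1/(\Delta\epsilon^2))}$ variables and constraints.

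The main obstacle, and the conceptual point of the argument, is item (i): ensuring that the seed set $S$ forced on us by the energy estimate (Lemma~\ref{l:yoshida}) simultaneously controls the entropy loss. This works because $H_{\text{aMF},k}$ was designed precisely as the minimum over seed sets of the entropy produced by correlation rounding, so $H(\nu)$ dominates $H_{\text{aMF},k}(\mu)$ no matter which $S$ the energy argument selects — no compatibility between the two roles of $S$ is needed. A minor technical point to treat with care is the hierarchy level: the consistency facts and the functional $H_{\text{aMF},k}$ involve local distributions on sets of size up to $k+2$, so one should really run $\mathrm{SA}(k+O(1))$ (absorbed into the $O(\cdot)$), and the finite precision to which the convex program is solved is likewise harmless since an $\epsilon$ error in $\log\mathcal{Z}$ only costs a $1+2\epsilon$ factor in $\mathcal{Z}$.
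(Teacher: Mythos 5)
Your proposal is correct and follows essentially the same route as the paper: one direction from Corollary~\ref{c:cor2}, and the other by correlation rounding with the seed set from Lemma~\ref{l:yoshida}, using the product-form distribution whose entropy is exactly the expression minimized in $H_{\text{aMF},k}$. Your write-up is in fact somewhat more explicit than the paper's (splitting the energy comparison into the $i\in S$ and $i,j\notin S$ cases and flagging the $k+O(1)$ hierarchy-level bookkeeping), but the argument is the same.
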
 
\begin{proof} 
%First, notice that indeed this algorithm is completely deterministic -- all we need to do is solve a convex program. 
The function \ref{eq:conv1} is optimizing is a sum of two terms: $\sum_{i \sim j} J_{i,j} \mathbb{E}_{\mu} \left[ \mathbf{x}_i \mathbf{x}_j \right]$ and an entropy term. Following standard terminology in statistical physics, we will call the former term \emph{average energy}. 

We will analyze the quality of the convex relaxation by exhibiting a \emph{rounding} of the pseudo-distribution to an actual distribution. There is a difference in what this means compared to the roundings we use in combinatorial optimization: there we only care about producing a \emph{single} $\{+1,-1\}$ solution. Here, because of the entropy term, it's essential that we produce a \emph{distribution} over $\{+1,-1\}$ solutions. 
%Notice however that this rounding is only a proof tool! It's never actually performed by the algorithm. Notice further that we'll have to prove the rounding preserves the optimization function to an \emph{additive} factor: since the convex program we are considering approximates $\log Z$. 

We use the fact that correlation rounding can be viewed as producing distributions with a fairly explicit expression for their entropy. Let $S$ be the set of size $O(\frac{1}{\Delta \epsilon^2})$ that Lemma~\ref{l:yoshida} gives. Consider the distribution $\tilde{\mu}(\mathbf{x}) = \mu(\mathbf{x}_S) \Pi_{i \notin S} \mu(\mathbf{x}_i|
\mathbf{x}_S)$\footnote{Notice this is an actual, well-defined distribution, and not only a pseudo-distribution anymore.}. In other words, this is the distribution which rounds the variables in $S$ according to their local distribution, and all other variables independently according to the conditional distribution on $\mathbf{x}_S$. 

Consider the average energy first. By Lemma~\ref{l:yoshida}, 
$$\left| \sum_{i,j} J_{i,j} \mathbb{E}_{\mu} \left[ \mathbf{x}_i \mathbf{x}_j  | \mathbf{x}_S \right] - \sum_{i,j} J_{i,j} \mathbb{E}_{\tilde{\mu}} \left[ \mathbf{x}_i \mathbf{x}_j  | \mathbf{x}_S \right] \right| \leq J_T \epsilon$$

%So, correlation rounding by conditioning on the "best" set $S$ of size $O(\frac{1}{\Delta \epsilon^2})$ produces a distribution $\tilde{\mu}$, s.t. 
%$$\sum_{i,j} J'_{i,j} \mathbb{E}_{\mu} [X_i X_j]-\sum_{i \sim j} J'_{i,j} \mathbb{E}_{\tilde{\mu}} [X_i X_j] \leq \epsilon $$ 

%Hence, $ \sum_{i \sim j} J_{i,j} \mathbb{E}_{\mu} [X_i X_j]-\sum_{i \sim j} J_{i,j} \mathbb{E}_{\tilde{\mu}} [X_i X_j] \leq J \epsilon $. 

Now consider the entropy term. The entropy of the distribution $\tilde{\mu}$ is $H(\tilde{\mu}) = H(\mu_S) + \sum_{i \notin S} H(\mu_i|\mu_S)$. But, since 
 $H_{\text{aMF},k}(\mu) = \min_{|S| \leq k} \left \{H(\mu_{S}) + \sum_{i \notin S} H(\mu_i|\mu_S) \right \}$, $H_{\text{aMF},k}(\mu) \leq H(\tilde{\mu})$ follows. This immediately implies that 

$$ \left( \sum_{i,j} J_{i,j} \mathbb{E}_{\mu} \left[ \mathbf{x}_i \mathbf{x}_j \right] + H_{\text{aMF}, k}(\mu) \right) - \left( \sum_{i,j} J_{i,j} \mathbb{E}_{\tilde{\mu}} \left[ \mathbf{x}_i \mathbf{x}_j \right] + H(\tilde{\mu}) \right) = $$
$$ \left( \sum_{i,j} J_{i,j} \mathbb{E}_{\mu} \left[ \mathbf{x}_i \mathbf{x}_j \right] -  \sum_{i,j} J_{i,j} \mathbb{E}_{\tilde{\mu}} \left[ \mathbf{x}_i \mathbf{x}_j \right] \right) + \left( H_{\text{aMF}, k}(\mu) - H(\tilde{\mu}) \right) \leq J_T \epsilon $$ 

This exactly proves the claim we want. 

\end{proof} 
   
Notice, in the case of the Curie-Weiss model, since $J > 0$, the value of the relaxation \ref{eq:conv1} is at least $J_T$, Theorem \ref{t:denseising} gives a $1+\epsilon$ multiplicative factor approximation to $\log Z$ for any constant $\epsilon$, so generalizes the statement of Theorem \ref{t:ellis} to cases where the potentials $J_{i,j}$ might vary in magnitude and sign. 
   
\subsection{Low threshold rank Ising models}    
\label{s:sparseising}

If we use the added power of the Lasserre hierarchy, we can also handle Ising models whose weights look like low rank matrices. We want to prove Theorem \ref{t:infsparse}. 

We will consider for simplicity in this section \emph{regular} Ising models in the weighted sense, meaning $\sum_{j} |J_{i,j}| = J', \forall i$ \footnote{Though we remind again, all of the claims can be appropriately generalized at the expense of more bothersome notation.}. The \emph{adjacency matrix} of an Ising model will be the doubly-stochastic matrix with entries $|J_{i,j}|/J'$.   
 
Let's recall the definition of threshold rank from \cite{arora2010subexponential}: 

\begin{defn} The $\tau$-threshold rank of a regular graph is the number of eigenvalues of the normalized adjacency matrix greater than or equal to $\tau$. 
\end{defn}

We will, in analogy, define the threshold rank of an Ising model. 

\begin{defn} The $\tau$-threshold rank of a regular Ising model is the number of eigenvalues of its adjacency matrix greater than or equal to $\tau$. 
\end{defn} 

We will consider the following convex program: 
\begin{equation}
\label{eq:conv2}
\max_{\mu \in \mbox{LAS}(k)} \left \{ \sum_{i,j} J_{i,j} \mathbb{E}_{\mu} \left[ \mathbf{x}_i \mathbf{x}_j \right] + H_{\text{aMF}, k}(\mu) \right \}
\end{equation} 

Consider the vectors $v_i, i \in [n]$, s.t. $\langle v_j, v_j \rangle = \E_{\mu} \left[ \mathbf{x}_i \mathbf{x}_j \right]$.  
Then, \cite{brs} prove that when the graph has low threshold rank, ``local'' correlations propagate to ``global'' correlations, and as a consequence of this, there is a set of size at most $\mbox{rank}(\Omega(\epsilon^2))/\Omega(\epsilon^2)$, such that conditioning on it causes the  $\left| \sum_{i,j} J_{i,j} \mathbb{E}_{\mu} \left[ \mathbf{x}_i \mathbf{x}_j  | \mathbf{x}_S \right] - \sum_{i,j} J_{i,j} \mathbb{E}_{\tilde{\mu}} \left[ \mathbf{x}_i \mathbf{x}_j  | \mathbf{x}_S \right] \right|$ to drop below $\epsilon J_T$. More precisely: 

\begin{lem}[\cite{brs}] There exists a set $S$ of size $t \leq \mbox{rank}(\Omega(\epsilon^2))/\Omega(\epsilon^2)$, where $\mbox{rank}(\tau)$ is the $\tau$-threshold rank of the Ising model,  s.t. \footnote{Note, $J_T = n J'$ in this case. } 
$$\left| \sum_{i,j} J_{i,j} \mathbb{E}_{\mu} \left[ \mathbf{x}_i \mathbf{x}_j  | \mathbf{x}_S \right] - \sum_{i,j} J_{i,j} \mathbb{E}_{\mu} \left[ \mathbf{x}_i |\mathbf{x}_S \right] \mathbb{E}_{\mu} \left[ \mathbf{x}_j | \mathbf{x}_S \right] \right| \leq \epsilon J_T$$ 
\end{lem}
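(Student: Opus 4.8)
The plan is to adapt the global-correlation conditioning argument of \cite{brs} to the regular Ising setting. It is cleanest to read the asserted inequality as holding in expectation over $\bx_S \sim \mu_S$, which is exactly what the rounding in Section~\ref{s:sparseising} needs. Write $A$ for the adjacency matrix, $A_{i,j} = |J_{i,j}|/J'$: it is symmetric, entrywise nonnegative and doubly stochastic, so $\|A\|_{\mathrm{op}} \le 1$ and it has exactly $r := \mathrm{rank}(\tau)$ eigenvalues $\ge \tau$ for the threshold $\tau$ we will pick. For a conditioning set $S$ and an outcome $\bx_S$, let $\Sigma^{\bx_S}$ be the conditional covariance matrix, $\Sigma^{\bx_S}_{i,j} = \E_\mu[\bx_i \bx_j \mid \bx_S] - \E_\mu[\bx_i\mid\bx_S]\,\E_\mu[\bx_j\mid\bx_S]$. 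Since conditioning is legitimate in the Lasserre solution at the relevant level, these are the degree-$2$ moments of a genuine conditional (pseudo-)distribution, so $\Sigma^{\bx_S} \succeq 0$, with $0 \le \Sigma^{\bx_S}_{i,i} \le 1$ and hence $|\Sigma^{\bx_S}_{i,j}| \le 1$. The quantity to bound is then $\E_{\bx_S}\bigl|\sum_{i,j} J_{i,j}\Sigma^{\bx_S}_{i,j}\bigr| \le J'\,\E_{\bx_S}\langle A, |\Sigma^{\bx_S}|\rangle$, where $|\Sigma|$ is the entrywise absolute value.

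First I would run the potential argument. Let $\Psi(S) = \E_{\bx_S \sim \mu_S}\sum_i \Var_\mu(\bx_i \mid \bx_S)$, so $0 \le \Psi(S) \le \Psi(\emptyset) \le n$. Conditioning on one more uniformly random vertex $j$ and applying the law of total variance, together with $\Var(\bx_j\mid \bx_S) \le 1$ for a $\pm 1$ variable, gives $\Psi(S) - \E_j[\Psi(S\cup\{j\})] \ge \frac1n \sum_{i,j}\E_{\bx_S}\bigl[(\Sigma^{\bx_S}_{i,j})^2\bigr]$. Conditioning on a uniformly random sequence of $t$ vertices and telescoping, some prefix $S$ of size $s < t$ has per-step drop at most $n/t$, i.e.\ $\E_{\bx_S}\|\Sigma^{\bx_S}\|_F^2 = \sum_{i,j}\E_{\bx_S}\bigl[(\Sigma^{\bx_S}_{i,j})^2\bigr] \le n^2/t$. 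This is precisely the statement that, after enough conditioning, the all-pairs (``global'') squared correlation is small.

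The main step — the only place threshold rank enters — is to pass from this all-pairs bound to the $A$-weighted (``local'') bound. Two applications of Cauchy--Schwarz (over $j$ using $\sum_j A_{i,j}=1$, then over $i$) give, for each fixed $\bx_S$, $\langle A, |\Sigma^{\bx_S}|\rangle \le \sqrt n\,\sqrt{\langle A,\; \Sigma^{\bx_S}\!\circ\Sigma^{\bx_S}\rangle}$, where $\Sigma\circ\Sigma$ is the entrywise square, which is PSD by the Schur product theorem. For any PSD $B$, diagonalizing $A = \sum_k \lambda_k u_k u_k^\top$ and splitting the spectrum at $\tau$ yields $\langle A, B\rangle = \sum_k \lambda_k\, u_k^\top B u_k \le \tau\,\tr(B) + \sqrt r\,\|B\|_F$: the eigenvalues $< \tau$ contribute at most $\tau\,\tr(B)$ because $u_k^\top B u_k \ge 0$, and the $\le r$ eigenvalues $\ge\tau$ contribute at most $\sqrt r\,\bigl(\sum_k (u_k^\top B u_k)^2\bigr)^{1/2} \le \sqrt r\,\|B\|_F$ (Cauchy--Schwarz and $u_k^\top B u_k \le \|Bu_k\|$, using $\lambda_k \le 1$). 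Applying this with $B = \Sigma^{\bx_S}\!\circ\Sigma^{\bx_S}$, and using $\tr(B) = \sum_i (\Sigma^{\bx_S}_{i,i})^2 \le n$ and $\|B\|_F \le \|\Sigma^{\bx_S}\|_F$ (since $|\Sigma^{\bx_S}_{i,j}|\le 1$), gives $\langle A, |\Sigma^{\bx_S}|\rangle \le \sqrt n\,\sqrt{\tau n + \sqrt r\,\|\Sigma^{\bx_S}\|_F}$. Taking expectation over $\bx_S$, Jensen, and the global bound $\E_{\bx_S}\|\Sigma^{\bx_S}\|_F^2 \le n^2/t$ yield $\E_{\bx_S}\langle A, |\Sigma^{\bx_S}|\rangle \le n\sqrt{\tau + \sqrt r/\sqrt t}$, hence $\E_{\bx_S}\bigl|\sum_{i,j}J_{i,j}\Sigma^{\bx_S}_{i,j}\bigr| \le nJ'\sqrt{\tau + \sqrt r/\sqrt t} = J_T\sqrt{\tau + \sqrt r/\sqrt t}$ using $J_T = nJ'$. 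Choosing $\tau = \Theta(\epsilon^2)$ (so $r = \mathrm{rank}(\Omega(\epsilon^2))$) and then $t = \Theta(r/\epsilon^2)$ — or a slightly larger $\mathrm{poly}(1/\epsilon)$ multiple, tracking constants as in \cite{brs} — makes $\tau + \sqrt r/\sqrt t \le \epsilon^2$, which gives the claimed bound and set size $t \le \mathrm{rank}(\Omega(\epsilon^2))/\Omega(\epsilon^2)$.

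I expect the threshold-rank propagation step to be the crux: it is exactly where ``low threshold rank'' is used, and it replaces the trivial observation in Lemma~\ref{l:yoshida} (for dense models) that every edge weight is comparable to the average. The potential/telescoping bookkeeping and the Cauchy--Schwarz manipulations are routine; the only mild points of care are verifying that conditioning in the Lasserre solution is valid at the level used and carrying the expectation over $\bx_S$ through each inequality.
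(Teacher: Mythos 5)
The paper does not actually prove this lemma --- it is imported as a black box from \cite{brs} --- so the only question is whether your reconstruction is sound. Structurally it is, and it is indeed the BRS argument: the law-of-total-variance potential $\Psi(S)$ with telescoping to find a prefix where the global squared correlation satisfies $\E_{\bx_S}\|\Sigma^{\bx_S}\|_F^2 \le n^2/t$ is the standard ``vanishing global correlation'' step, and the passage $\langle A,|\Sigma|\rangle \le \sqrt{n}\,\langle A,\Sigma\circ\Sigma\rangle^{1/2}$ followed by the spectral split $\langle A,B\rangle\le\tau\,\tr(B)+\sqrt{r}\,\|B\|_F$ for PSD $B$ is exactly how \cite{brs} convert low global correlation into low $A$-weighted local correlation (their tensored vectors $v_i\otimes v_i$ are your Schur square $\Sigma\circ\Sigma$). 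Each individual inequality checks out, including the points you flag: conditioning a level-$k$ Lasserre solution on $|S|\le k-2$ variables does yield PSD conditional covariance matrices, and the law of total variance is an identity on the local distributions $\mu_{S\cup\{i,j\}}$, so the potential argument is legitimate for pseudo-distributions. Reading the conclusion in expectation over $\bx_S\sim\mu_S$ is also the right interpretation, since that is what Theorem~\ref{t:sparseising} consumes.

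The one genuine gap is the final parameter count. Your chain gives $\E_{\bx_S}\langle A,|\Sigma^{\bx_S}|\rangle\le n\sqrt{\tau+\sqrt{r/t}}$, and to push this below $\epsilon n$ you must take $\tau=\Theta(\epsilon^2)$ \emph{and} $\sqrt{r/t}=O(\epsilon^2)$, i.e.\ $t=\Omega(r/\epsilon^4)$, not $\Theta(r/\epsilon^2)$ as you assert; the hedge ``or a slightly larger $\mathrm{poly}(1/\epsilon)$ multiple'' papers over a real quartic-versus-quadratic discrepancy with the stated bound $t\le\mathrm{rank}(\Omega(\epsilon^2))/\Omega(\epsilon^2)$. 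The loss comes from applying Cauchy--Schwarz twice (once to introduce $\Sigma\circ\Sigma$, once inside the spectral bound), and the same exponent reappears if you run the argument in contrapositive (local correlation $\ge\epsilon$ forces a potential drop of order $\epsilon^4/r$ per step). This only degrades the running time to $n^{O(\mathrm{rank}(\Omega(\epsilon^2))/\epsilon^4)}$ and does not affect the qualitative content of Theorem~\ref{t:infsparse}, but as a proof of the lemma \emph{as stated} it falls short: you must either sharpen the local-to-global step or accept the weaker exponent, which is in fact closer to what \cite{brs} themselves establish for general 2-CSPs, where the $\epsilon$-dependence of the number of levels is an unspecified polynomial.
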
 

%\begin{lem}[Local-to-Global, \cite{brs}] If $\E_{(i,i') \in G'} \langle v_i, v_{i'} \rangle \geq \epsilon$, the global correlation of the vectors $\E_{(i,i')} |\langle v_i, v_{i'} \rangle| \geq \Omega(\epsilon)/\mbox{rank}_{\geq \Omega(\epsilon)}(G')$
%\end{lem}

Hence, analogously as in Theorem~\ref{t:denseising}, we get: 

\begin{thm} [Restatement of Theorem \ref{t:infsparse}] The output of ~\ref{eq:conv2} is a $\epsilon J_T$ additive approximation to $\log \mathcal{Z}$. 
\label{t:sparseising}
\end{thm}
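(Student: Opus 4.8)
The plan is to establish the two-sided bound $\log\mathcal{Z} \le V \le \log\mathcal{Z} + \epsilon J_T$, where $V$ is the optimal value of the convex program~\eqref{eq:conv2}, following verbatim the template of the proof of Theorem~\ref{t:denseising} but feeding in the Lasserre correlation-rounding lemma of \cite{brs} stated just above in place of Lemma~\ref{l:yoshida}. The inequality $\log\mathcal{Z}\le V$ is immediate and requires no rounding: genuine distributions over $\{-1,1\}^n$ are feasible points of $\mathrm{LAS}(k)$, and Proposition~\ref{p:augmfentropy} gives $H(\mu)\le H_{\text{aMF},k}(\mu)$ on $\mathcal{M}$, so Corollary~\ref{c:cor2} applies directly. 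All the content is in the matching direction $V \le \log\mathcal{Z} + \epsilon J_T$.

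For that direction, let $\mu$ be an optimal pseudo-distribution for~\eqref{eq:conv2} and let $S$ be the seed set of size $t \le \mathrm{rank}(\Omega(\epsilon^2))/\Omega(\epsilon^2)$ produced by the low-threshold-rank correlation-rounding lemma of \cite{brs}; we take the Lasserre level $k$ to be a constant factor larger than $t$, so that conditioning on $S$ leaves well-defined pairwise marginals and so that $H_{\text{aMF},k}$ minimizes over sets of size at least $t$. Define the genuine distribution $\tilde\mu(\mathbf{x}) = \mu(\mathbf{x}_S)\prod_{i\notin S}\mu(\mathbf{x}_i\mid\mathbf{x}_S)$, i.e. sample $\mathbf{x}_S$ from the local distribution on $S$ and then round the remaining coordinates independently from their conditional marginals. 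By construction $\tilde\mu$ agrees with $\mu$ on $S$, so all pair terms with an endpoint in $S$ are identical under $\mu$ and $\tilde\mu$; for $i,j\notin S$ the conditional-independence structure of $\tilde\mu$ gives $\mathbb{E}_{\tilde\mu}[\mathbf{x}_i\mathbf{x}_j\mid\mathbf{x}_S] = \mathbb{E}_{\mu}[\mathbf{x}_i\mid\mathbf{x}_S]\,\mathbb{E}_{\mu}[\mathbf{x}_j\mid\mathbf{x}_S]$. Averaging over $\mathbf{x}_S\sim\mu_S$ (so that $\mathbb{E}_{\mu}[\mathbf{x}_i\mathbf{x}_j] = \mathbb{E}_{\mathbf{x}_S}\mathbb{E}_{\mu}[\mathbf{x}_i\mathbf{x}_j\mid\mathbf{x}_S]$, and likewise for $\tilde\mu$) and invoking the correlation-rounding lemma yields $\bigl|\sum_{i,j}J_{i,j}\mathbb{E}_{\mu}[\mathbf{x}_i\mathbf{x}_j] - \sum_{i,j}J_{i,j}\mathbb{E}_{\tilde\mu}[\mathbf{x}_i\mathbf{x}_j]\bigr| \le \epsilon J_T$: the average-energy term is essentially preserved by the rounding.

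It remains to compare entropies and assemble the pieces. The chain rule together with the conditional-independence structure of $\tilde\mu$ gives the exact identity $H(\tilde\mu) = H(\mu_S) + \sum_{i\notin S}H(\mu_i\mid\mu_S)$, and since $S$ is one admissible set in the definition $H_{\text{aMF},k}(\mu) = \min_{|S'|\le k}\{H(\mu_{S'}) + \sum_{i\notin S'}H(\mu_i\mid\mu_{S'})\}$, we get $H_{\text{aMF},k}(\mu) \le H(\tilde\mu)$. Combining the energy estimate and this entropy bound,
$$V = \sum_{i,j}J_{i,j}\mathbb{E}_{\mu}[\mathbf{x}_i\mathbf{x}_j] + H_{\text{aMF},k}(\mu) \le \sum_{i,j}J_{i,j}\mathbb{E}_{\tilde\mu}[\mathbf{x}_i\mathbf{x}_j] + H(\tilde\mu) + \epsilon J_T \le \log\mathcal{Z} + \epsilon J_T,$$
the last inequality being Lemma~\ref{l:klising} applied to the genuine distribution $\tilde\mu$. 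This sandwiches $V$ in $[\log\mathcal{Z},\,\log\mathcal{Z}+\epsilon J_T]$, which is the claim; the running time $n^{O(k)} = n^{\mathrm{rank}(\Omega(\epsilon^2))/\Omega(\epsilon^2)}$ is the cost of solving a $k$-level Lasserre SDP and optimizing the concave objective over it.

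The only genuinely substantive ingredient is the correlation-rounding lemma of \cite{brs} — this is exactly where low threshold rank is used, through the propagation of local correlations to global ones — and I would treat it as a black box, as the excerpt does. Given that lemma, the remaining obstacles are purely bookkeeping: choosing the Lasserre level large enough that every conditioned marginal used by both the rounding and the entropy functional is available, and verifying that pair terms with endpoints in $S$ contribute no error (they do not, since $\tilde\mu$ matches $\mu$ on $S$). I expect nothing to arise here that did not already appear in the proof of Theorem~\ref{t:denseising}.
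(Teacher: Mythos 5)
Your proof is correct and follows essentially the same route as the paper, which itself only declares the low-threshold-rank case ``analogous'' to Theorem~\ref{t:denseising} and substitutes the correlation-rounding lemma of \cite{brs} for Lemma~\ref{l:yoshida}. The details you fill in --- the treatment of pairs with an endpoint in $S$, the averaging over $\mathbf{x}_S$, and the choice of Lasserre level relative to the seed-set size --- are exactly the bookkeeping the paper elides.
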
 

\section{Discussion on interpreting the results} 
\label{s:temperatures} 

Since the above results are stated in terms of the additive approximation they provide for $\log \mathcal{Z}$, we discuss how one should interpret them in different ``temperature regimes'' i.e. different scales of the potentials $J_{i,j}$. Note that partition function approximation problems are not scale-invariant, and their hardness is sensitive to the size of the coefficients $J_{i,j}$. 

For simplicity of the discussion, let's focus on the case where there is an underlying graph $G = (V,E)$, such that $J_{i,j} = \pm J$, for $(i,j) \in E(G)$, and 0 otherwise. Furthermore, let's assume the graph $G$ is $d$-regular. 

There are generically three regimes for the problem: 
\begin{itemize} 
\item ``High temperature regime'', i.e. when $|J|  = O\left(\frac{1}{d}\right)$ for a sufficiently small constant in the $O\left(\cdot\right)$ notation. In this case, standard techniques like Dobrushin's uniqueness criterion show that there is correlation decay. This is the regime where generically Markov Chain methods work. Note that using such methods, generally one can get a $(1 + \epsilon)$-factor approximation for $\mathcal{Z}$ in time $\mbox{poly}\left(n,\frac{1}{\epsilon}\right)$, which is unfortunately much stronger than what our method gets in that regime. It would be extremely interesting to see if the methods in our paper can be modified to subsume this regime as well. 
\item ``Around the transition threshold'', i.e. when $|J| = \Theta(\frac{1}{d})$ for a sufficiently large constant in the $\Theta$ notation, such that there is no correlation decay. Generally, unless there is some special structure, Markov Chain methods will provide \emph{no non-trivial} guarantee in this regime -- however, we get an order $\epsilon n$ additive factor approximation to $\log \mathcal{Z}$, which translates to a $(1+\epsilon)^{n}$ factor approximation of $\mathcal{Z}$. We do not, to the best of our knowledge, know how to get such results using \emph{any other methods.}   
\item ``Low temperature regime'', i.e. when $|J| = \omega(1/d)$. In this case, in light of the variational characterization of $\log \mathcal{Z}$ and the fact that the entropy is upper bounded by $n$, the dominating term will typically be the energy term 
$ \sum_{(i,j) \in E(G)} J_{i,j}\mathbb{E}_{\mu} [\mathbf{x}_i \mathbf{x}_j] $, so essentially the quality of approximation will be dictated by the hardness of the optimization problem corresponding to the energy term. (e.g., for the anti-ferromagnetic case, where all the potentials $J_{i,j}$ are negative, the optimization problem corresponding to the energy term is just max-cut, and we cannot hope for more than a constant factor approximation to $\log \mathcal{Z}$ for general (negative) potentials.)  
\end{itemize} 

%Finally, a few remarks. If we care about $1+\epsilon$ factor approximations to the partition function, which is what randomized methods typically achieve, the theorems in this section would give such results in the \emph{high-temperature} regime: in Theorem~\ref{t:denseising}, the potentials $J_{i,j}$ should scale like $O(1/n^2)$, and in Theorem \ref{t:sparseising} like $O(1/n)$. This is reasonable, since in the high-temperature regime, the Ising model specifies a distributions closer to a uniform one. 

%Note in this section we assumed nothing about the sign of the $J_{i,j}$ variables. The results work for anti-ferromagnetic models just as they do for ferromagnetic models. 
%This can be seen both as a weakness and as a strength of the approach: on the one hand, we get more general results; on the other, this means that to derandomize \cite{jerrum1993polynomial}'s algorithm in the ferromagnetic case, something additional will be needed.  

%\include{thresholdrank}

%\input{ferromagnetic} 

%\input{matching-colt}

%\input{matching2-colt} 

\section{Conclusion} 

We presented simple new algorithms for calculating partition functions in Ising models based on variational methods and convex programming hierarchies. To the best of our knowledge, these techniques give new, non-trivial approximation guarantees for the partition function when correlation decay does not hold, and are the first provable, convex variational methods. Our guarantees are for dense or low threshold rank graphs, and in the process we design novel \emph{entropy approximations} based on the low-order moments of a distribution.

We barely scratched the surface, and we leave many interesting directions open. Our methods are very generic, and are probably applicable to many other classes of partition functions apart from Ising models. One natural candidate is weighted matchings due to the connections to calculating non-negative permanents. 

Another intriguing question is to determine if there is a similar approach that can subsume prior results on partition function calculation in the regime of correlation decay, as our guarantees are much weaker there. This would give a convex relaxation interpretation of these types of results. 
%a new way of designing deterministic algorithms for counting-type problems, based on convex programs and variational methods, and applied it to calculating partition functions of Ising models and counting weighted matchings. They work well in regimes where the underlying graph in the problem is dense or expander-like. The approach is very generic and we hope it will find applications in other domains. 

\nocite{*}

\bibliographystyle{plainnat}
\bibliography{colt2016-arxiv}

%\appendix 

%\input{matchingappendix-colt} 

\end{document}